\documentclass{article}
\usepackage[accepted]{icml2025_corrections}

\usepackage{amsmath}
\usepackage{amssymb}
\usepackage{mathtools}
\usepackage{amsthm}
\usepackage{thmtools} 

\usepackage{microtype}
\usepackage{graphicx}
\usepackage{booktabs} 
\usepackage{hyperref}
\usepackage[compress]{cleveref}
\crefname{theorem}{Theorem}{Theorems}
\crefname{table}{Table}{Tables}
\crefname{figure}{Fig.}{Figs.}
\Crefname{figure}{Figure}{Figures}
\crefname{appendix}{section}{sections}
\Crefname{appendix}{Section}{Sections}
\crefname{lemma}{Lemma}{Lemmas}

\usepackage{array}
\usepackage{upgreek}
\usepackage{xspace}
\usepackage{pgfplots}
\usepackage{subcaption} 

\pgfplotsset{compat=1.18}
\newcolumntype{M}{>{$}c<{$}}
\newcolumntype{R}{>{$}r<{$}}
\newcolumntype{D}{>{$\displaystyle}c<{$}}
\newcolumntype{H}{>{\setbox0=\hbox\bgroup$}c<{$\egroup}@{}}

\usetikzlibrary{external}
\tikzset{external/up to date check=simple} 

\newcommand\eachnthpoint{1} 

\theoremstyle{plain}
\newtheorem{theorem}{Theorem}[section]

\newtheorem{lemma}[theorem]{Lemma}

\theoremstyle{definition}

\theoremstyle{remark}

\newcommand{\fracpartial}[2]{\frac{\partial #1}{\partial  #2}} 
\newcommand{\sfracpartial}[2]{{\partial #1}/{\partial  #2}} 
\newcommand{\fracd}[2]{\frac{\dx{#1}}{\dx{#2}}}
\renewcommand{\vec}[1]{\boldsymbol{#1}}
\newcommand{\nth}{\text{th}}
\newcommand{\dx}[1]{{\mathrm{d}#1}}
\newcommand{\T}{\mathrm{T}}
\newcommand\setof[1]{\left\{#1\right\}}
\newcommand\defeq{\mathrel{\stackrel{_\mathrm{def}}{=}}}

\newcommand\tinysd[1]{{\scriptscriptstyle\pm#1}}
\DeclareMathOperator\supp{supp}

\newcommand\Data{D}

\newcommand\E[2][]{\mathbb{E}_{#1}\!\left[#2\right]}

\newcommand\D[3]{\mathrm{D}_{#1}\!\left[#2\|#3\right]}
\newcommand\KL[2]{\mathrm{KL}\!\left[#1\|#2\right]}
\newcommand\LB{\mathcal{L}}
\newcommand\LBb{\mathcal{L}^{\mathrm{b}}}
\newcommand\LBh{\mathcal{L}^{\mathrm{h}}}
\newcommand\LBbh{\mathcal{L}^{\mathrm{bh}}}
\newcommand\Levd{\mathcal{L}_{\mathrm{evd}}}
\newcommand\LRenyi{\mathcal{L}^{\mathrm{R}}}
\newcommand\LELBO{\mathcal{L}_{\mathrm{ELBO}}}
\newcommand\Lbeta{\mathcal{L}^{\beta}}

\newcommand\uq{\tilde{q}}

\newcommand\uZ{\tilde{Z}}
\newcommand\Zc{Z_{\mathrm{c}}}
\newcommand\Zd{Z_{\mathrm{d}}}
\newcommand\uqc[1][]{\tilde{q}_{\mathrm{c}#1}}
\newcommand\uqd[1][]{\tilde{q}_{\mathrm{d}#1}}
\newcommand\qc[1][]{q_{\mathrm{c}#1}}
\newcommand\qd[1][]{q_{\mathrm{d}#1}}
\newcommand\Ns{N_{\mathrm{s}}}
\newcommand\Normal{\mathcal{N}}

\newcommand\sigmaobs{\sigma_{\mathrm{obs}}}

\begin{document}

\twocolumn[
\icmltitle{Variational Learning of Fractional Posteriors}
\begin{icmlauthorlist}
\icmlauthor{Kian Ming A. Chai}{dso}
\icmlauthor{Edwin V. Bonilla}{data61}
\end{icmlauthorlist}
\icmlaffiliation{dso}{DSO National Laboratories, Singapore}
\icmlaffiliation{data61}{CSIRO's Data61, Australia}

\icmlcorrespondingauthor{Kian Ming A. Chai}{ckianmin@dso.org.sg} 
\icmlcorrespondingauthor{Edwin V. Bonilla}{edwin.bonilla@data61.csiro.au} 

\vskip 0.3in
]
\printAffiliationsAndNotice{}

\begin{abstract}
We introduce a novel one-parameter variational objective that lower bounds the data evidence and enables the estimation of approximate fractional posteriors. We extend this framework to hierarchical construction and Bayes posteriors, offering a versatile tool for probabilistic modelling. We demonstrate two cases where gradients can be obtained analytically and a simulation study on mixture models showing that our fractional posteriors can be used to achieve better calibration compared to posteriors from the conventional variational bound. When applied to variational autoencoders (VAEs), our approach attains higher evidence bounds and enables learning of high-performing approximate Bayes posteriors jointly with fractional posteriors. We show that VAEs trained with fractional posteriors produce decoders that are better aligned for generation from the prior.
\end{abstract}

\section{Introduction}

Exact Bayesian inference is intractable for most models of interest in machine learning. 
Variational methods \cite{jordan-ml-1999elbo, minka-uai-2001-ep, opper-jmlr-2005,blei-jasa-2017-vi} address this by casting the required integration as optimisation. 
These methods have two objectives: 
to estimate the marginal likelihood or data evidence \cite{mackay-itila-2003} for model comparison or model optimisation; 
and to obtain an approximate Bayes posterior for prediction.

The widely used evidence lower bound (ELBO) \cite{jordan-ml-1999elbo}
often leads to underestimated uncertainty and suboptimal posterior calibration \citep{wang-aistats-2005, bishop-2006-prml,yao-et-al-icml-2018}.
These deficiencies can be compounded by challenges inherent to general Bayesian modelling, such as  misspecification.
Fractional posteriors \citep{grunwald-isba-2017,bhattacharya-aos-2019} have emerged as a generalization of Bayesian inference to address these.
Unlike the Bayes posterior, which fully incorporates the likelihood, a fractional posterior has an exponent that weighs the likelihood to temper its influence. This approach has been shown to enhance robustness in misspecified models and has strong connections to PAC-Bayesian bounds \citep{bhattacharya-aos-2019}, which control generalization error in statistical learning. 

This work introduces a new variational framework that generalises conventional variational inference (VI) by allowing the approximation of fractional posteriors, enabling improved posterior flexibility and calibration. As in standard VI based on ELBO maximisation, our approach provides a lower bound on the marginal likelihood and extends to hierarchical construction \citep{pmlr-v48-ranganath16} and Bayes posteriors. Thus it offers a flexible trade-off between evidence maximization and posterior calibration, bridging the gap between standard VI and fractional Bayesian inference.

We explore both analytical and empirical insights into variational learning of fractional posteriors.
First, we identify cases where gradients can be derived analytically, eliminating the need for gradient estimators \citep{roeder2017sticking}.
Next, we perform a simulation study on mixture models, demonstrating that fractional posteriors achieve better-calibrated uncertainties compared to conventional VI.
Finally, we consider variational autoencoders (VAEs) \citep{kingma-iclr-2014vae}, showing that fractional posteriors not only give higher evidence bounds but also enhance generative performance by aligning decoders with the prior---a known issue in standard VAEs \citep{dai2019diagnosing}.

This work advances the field of approximate Bayesian inference with a theoretically grounded and empirically validated approach to fractional variational inference.
It demonstrates that fractional posteriors can improve model calibration and yield better generative models, and it offers an alternative to standard variational inference and learning approaches.

\paragraph*{Notation}
We use letter $p$ for model distributions and letters $q$ and $r$ for approximate distributions.
The tilde (\textasciitilde) accent is for the unnormalised version of the distribution that it modifies;
and the asterisk (*) superscript is for the optimised versions.
The unaccented letter $Z$ is for the normalising constant, and $\tilde{Z}$ is for an arbitrary scaling constant.
We reserve \emph{ELBO} for the conventional bound \citep{jordan-ml-1999elbo}.
Boldfaces are used only when needed to distinguish vectors from scalars.
We omit the \emph{approximate} qualifier from \emph{approximate posterior},
unless the context requires it.
When comparing evidence bounds, we use the adjective \emph{tighter} if we can ascertain that the bound is closer to a fixed evidence value;
and we use the adjective \emph{higher} if bounds cannot be compared for tightness because their corresponding fixed evidences are different.
The latter is limited to \cref{sec:vae,sec:decoder} when we optimise the likelihood of the model.
\cref{tbl:notation} in \cref{sec:proofs} lists the bounds in this paper.

\section{A Lower Bound with H\"{o}lder's Inequality}

The log-evidence $\Levd\defeq\log p(D)$ of data $\Data$ for a generative model involving an auxillary variable $z$ is
bounded by
the variational R\'{e}nyi (lower) bound $\LRenyi_{\alpha}$ \citep[Theorem 1]{li-nips-2016}
for $\alpha>0$:
\begin{gather*}
\Levd
\geq 
\LRenyi_{\alpha}
\defeq
\frac{1}{1-\alpha} \log  \int q(z) \left( p(\Data, z) / q(z) \right)^{1-\alpha}\dx{z}.
\end{gather*}
The Kullback-Leibler divergence (KL, $\alpha\rightarrow1$) is the only case where the chain rule of conditional probability holds exactly
to get  the conventional ELBO
$\LELBO \defeq \int q(z) \log p(\Data |z)\dx{z} - \int q(z) \log (q(z)/p(z)) \dx{z}$.
This involves the expected log conditional likelihood (first term) that encourages data fitting and the KL term between the approximate posterior $q(z)$ and the prior $p(z)$ (second term) that acts as 
a regulariser  to bias the posterior towards the prior. This decomposition is not possible for other values of $\alpha$, so $\LRenyi_{\alpha}$ generally cannot be expressed in such terms.

To this end, we revert to the original log-evidence $\Levd$ and apply H\"{o}lder's inequality \citep{rogers-holders-1888} in the manner of
$\left(\E{|X|^{1/\beta}}\right)^{\beta}\geq \E{|XY|} / \left(\E{|Y|^{1/\gamma}}\right)^{\gamma}$, where $\beta + \gamma = 1$ and $\beta,\gamma\in(0,1)$,
with
\begin{align*}
\E{\cdot}&\defeq \int p(z) \cdot\dx{z}, &
X&\defeq p(\Data|z)^{\beta}, &
Y&\defeq \uq(z) / p(z)
\end{align*} 
to obtain
\begin{align}
\Levd
&=\frac{1}{\beta} \log\left(\int p(z)\,p(\Data|z)  \dx{z} \right)^{\beta} \nonumber\\
&\geq \frac{1}{\beta}\log \frac{\int  \uq(z)\,p(\Data|z)^{\beta} \dx{z}}{ \left(\int p(z) \left( \uq(z) / p(z) \right) ^{1/\gamma}\dx{z}\right)^{\gamma}} \nonumber\\
&=\frac{1}{1-\gamma}\log\Zd-  \frac{\gamma}{1-\gamma}\log\Zc
\qquad \defeq\LB_{\gamma} ,
\label{eqn:LBg}
\end{align}
where we have expressed $\beta$ in $\gamma$,
and we have data-fitting and regularisation (complexity) terms
\begin{align*}
\uqd(z) &\defeq \uq(z) \, p(\Data|z)^{1-\gamma} &
\uqc(z) &\defeq \uq(z)^{1/\gamma} \, p(z)^{1-1/\gamma}\\
\Zd&\defeq\int  \uqd(z)\dx{z} &
\Zc&\defeq\int \uqc(z) \dx{z}.
\end{align*}
The derivation does not require $\uq(z)$ to be a distribution.
However $Y$ must be in $L^\gamma$ to apply the H\"{o}lder's inequality, that is,
$\supp \uq \subseteq \supp p$.
Hence, the $\uq$ must be consistent with the prior $p$, a desideratum.
If $\uq(z)$ is a distribution $q(z)$, that is, $\int q(z) \dx{z}=1$, then the second term of the bound is the R\'{e}nyi divergence $\D{1/\gamma}{q(z)}{p(z)}$.

Our objective $\LB_{\gamma}$ in \cref{eqn:LBg} can be seen as an  example of the \emph{generalized variational inference} framework \citep{knoblauch-jmlr-2022}.
However, it is uniquely derived as a lower bound to the log-evidence, 
so it naturally encodes the Occam's razor principle 
and can be used for model optimisation \citep[Chapter 28]{mackay-itila-2003}.
We can relate this objective to the conventional ELBO (\cref{thm:limELBO} shows that $\lim_{\gamma\rightarrow1}\LB_\gamma=\LELBO$ using the L'H\^{o}pital's rule and the convergence of R\'{e}nyi to KL divergence) and also provide two related bounds (\cref{thm:upperbound-power,thm:upperbound-inverse}).

\subsection{Fractional Posteriors}

The optimal $\LB_{\gamma}$ is tight at $\uq^*(z)=p(\Data|z)^{\gamma} p(z)/\uZ$, where $\uZ$ can be the normalising constant:
\begin{align*}
\LB_{\gamma}^*
&=\begin{multlined}[t]
\frac{1}{1-\gamma}\log \uZ^{-1}\int p(\Data|z)\,p(z) \dx{z} 
  \\ -\frac{\gamma}{1-\gamma}\log \uZ^{-1/\gamma}\int p(\Data|z) p(z)\dx{z}
\end{multlined}\\
&=\textstyle\log\int p(\Data|z) p(z)\dx{z} 
\quad\equiv\Levd.
\end{align*}
Since $\LB_{\gamma}$ is a lower bound, this already proves the optimality of $\uq^*$ 
(\cref{sec:saddlept} gives a variational derivation).
In contrast, 
the gap between $\Levd$ and $\LRenyi_{\alpha}$ is
the R\'{e}nyi divergence $\D{\alpha}{q(z)}{p(z|\Data)}$ (\cref{sec:VRB-gap}),
so $\LRenyi_{\alpha}$'s optimal $q(z)$ is the exact Bayes posterior $p(z|\Data)$.
Nonetheless, $\LB_\gamma$ is related to $\LRenyi_\alpha$ by suitable change of distributions (\cref{sec:divergence}).

The above shows that the bound is tight at a \emph{fractional posterior} where the data-likelihood is weighted with $\gamma \in (0,1)$ \citep{bhattacharya-aos-2019}.
This is more generally known as the Gibbs posterior \citep{zhang-colt-1999, alquier-jmlr-2016}, the power posterior \citep{friel-jrsssb-2008}, or the tempered posterior \citep{pitas-acml-2024}.
As mentioned in the introduction, fractional posteriors are related to robustness in misspecified models \citep{bhattacharya-aos-2019}.

We have obtained the fractional posterior directly from optimising $\LB_{\gamma}$, which
allows approximations to the unnormalised fractional posterior via optimisation 
within an assumed family of non-negative functions.
It is achieved without relying on PAC-Bayes or modifying the likelihood, and  
it follows from optimising a lower bound on $\Levd$.
It is an alternative to the 
approach by \citet{alquier-jmlr-2016}.

\subsection{On the choice of $\gamma$}
\label{sec:choosegamma}

It is generally futile to seek a $\gamma^*$ giving the tightest bound:
for a given probabilistic model and a fixed posterior $q$, $\gamma^*$ depends on $q$.
For, if $q$ is close to the exact Bayes posterior, then $\gamma^*=1$;
if $q$ is close to an exact fractional posterior, then $\gamma^*$ is that fraction.
The situation is the same if we learn $q$ within a family $\mathcal{Q}$.
If $\mathcal{Q}$ contains all the exact posteriors, Bayes and fractionals, then
all values of $\gamma$s are optimal because all give $\Levd$ after optimisation.
If, however, $\mathcal{Q}$ can approximate only certain fractional posteriors well,
then the corresponding $\gamma$s will give the tightest bounds.

Nonetheless, 
if one applies approximate inference analytically to a problem 
that is specified with an explicit prior, such as the normal distribution, 
it is typical to choose $\mathcal{Q}$ to be in the same family
as the prior, such that $\mathcal{Q}$ includes the prior and its neighbourhood.
In this case, we expect optimising with small $\gamma$ to give consistently tighter bounds.
This is shown in \cref{sec:calibration} empirically.

In a similar fashion, for challenging data sets for which we use neural networks,
we expect smaller $\gamma$ to give tighter bounds for simpler neural networks that can better approximate the prior 
and the fractional posteriors than the Bayes posterior.
This is supported in \cref{sec:vae-extra} empirically.

Considerations other than bounds may influence the choice of $\gamma$.
In \cref{sec:calibration}, we use calibration; in \cref{sec:decoder}, we want a posterior that is close to the prior.

\subsection{Extensions to Hierarchical Constructions}

We give two extensions to $\LB_\gamma$ to allow for more expressive fractional and Bayes posteriors
using mixing.
We show in \cref{sec:LBhOK,sec:LBbhOK} that degeneracy of the mixing 
distribution is not necessary for optimality,
in contrast to the case for ELBO \citep[Proposition 1]{yin-icml-2018sivi}.

\subsubsection{Fractional Posterior}
\label{sec:hpost}

Let $\uq(z)\defeq\int \uq(z | u) q(u) \dx{u}$ be a hierarchical model of the posterior distribution using mixing variable $u$.
Jensen's inequality for convexity of powers above unity gives
\begin{align}
\Zc
&= \int \left(\int \uq(z | u) q(u) \dx{u}\right)^{1/\gamma}  p(z)^{1-1/\gamma} \dx{z} \nonumber\\
&\leq \int \left(\int \uq(z | u)^{1/\gamma} q(u) \dx{u}\right)  p(z)^{1-1/\gamma} \dx{z} \nonumber\\
&= \int \left(\int  \uq(z | u)^{1/\gamma-1} \uq(z | u)\, q(u) \dx{u}\right)  p(z)^{1-1/\gamma} \dx{z} \nonumber\\
&= \iint  \left(\uq(z | u)/p(z)\right)^{1/\gamma-1}  \uq(z | u)\dx{z}\, q(u) \dx{u}
\label{eqn:HboundZc}
\end{align}
We may substitute this into \cref{eqn:LBg} to obtain another bound, which we shall call $\LBh_{\gamma}$.
This lower bound allows Monte Carlo estimates of the integral by only using samples from the posterior $\uq(z|u)$ (see \cref{sec:mcmcest}).

A similar approach has been used to lower bound the ELBO \citep[Theorem 1]{yin-icml-2018sivi}.
There, the optimal for $q(u)$ is known to be the delta distribution located for optimal $q(z|u)$ \citep[Proposition 1]{yin-icml-2018sivi}.
However, deviations from this property may happen in practice (see \cref{sec:mcmclearning,sec:vae-extra}).

\subsubsection{Bayes Posterior}
\label{sec:bpost}

We can bound the data term in $\LB_\gamma$ using Jensen's inequality with another variational distribution $r(z)$:
\begin{multline*}
\log \Zd
\geq
(1-\gamma)
\int r(z) \log p(\Data|z) \dx{z}
\\-
\int r(z) \log \frac{r(z)}{\uq(z)} \dx{z}
\end{multline*}
so that logarithm over the product of likelihoods becomes a sum over the logarithms of each likelihood,
and we may sample over the data points.
Section~\ref{sec:gmm} gives a different but more specific bound for the mixture model.

The above bound is exact at  $r^*(z) \propto \uq(z)\,p(\Data|z)^{1-\gamma}$.
If $\uq(z)$ is also optimal, then $r^*(z) \propto p(z) p(\Data|z)$, which is the Bayes posterior.
Combining the above bound with \cref{eqn:LBg} gives a bound on $\Levd$ involving both KL and R\'{e}nyi divergences.
We shall denote this bound by $\LBb_\gamma$.

If we fix $r(z)$ regardless of its optimality, and then optimise for $\uq(z)$, we obtain
$\uq^*(z) \propto r(z)^{\gamma} p(z)^{1-\gamma}$ interpolating between the fixed $r(z)$ and the model prior $p(z)$.
This shows that $r(z)$ has a constraining effect on $\uq(z)$,
so the fractional posteriors approximated by $\LBb_\gamma$ are in general different from those 
approximated by using $\LB_\gamma$.
In particular, \emph{if} $r(z)$ itself is a fractional posterior with fraction $\gamma'$,
then $\uq(z)$ has at best fraction $\gamma'\gamma$.
This is shown empirically in \cref{sec:compareLBbh}.

For a fixed $r(z)$, $\LBb_{\gamma}$ is upper bounded by $\LELBO$ (see \cref{thm:LBbhIsWorse}),
so $\LBb_\gamma$ in itself has limited use.
However,  we can use it with the hierarchical posterior model (\cref{sec:hpost})
to give more expressive posteriors.
For this, we have to go beyond just applying the hierarchical model on $r(z)$
because this will result in a degenerate mixing distribution for $r(z)$, 
since the terms involved are exactly the same as in ELBO.
To prevent degeneracy, we apply  hierarchical  model to
\emph{both} $\uq(z)$ and $r(z)$, with the \emph{same} mixing distribution $q(u)$.
That is,  $\uq(z) \defeq \int \uq(z | u) q(u) \dx{u}$  and $r(z) \defeq \int r(z|u) q(u) \dx{u}$.
Under this setting, we apply \cref{eqn:HboundZc} on the R\'{e}nyi divergence term
and the convexity on the KL term to obtain a bound we call $\LBbh_\gamma$ (see \cref{sec:LBbhproof}):
\begin{multline*}
\Levd
\geq
\iint  r(z|u) q(u) \log p(\Data|z)\,\dx{z} \dx{u}
\\- 
\frac{1}{1-\gamma}
\iint  r(z|u) q(u) \log \frac{r(z|u)}{\uq(z|u)} \dx{z} \dx{u}
\\-
\frac{\gamma}{1-\gamma}\log \iint  \uq(z|u) q(u) \left(\frac{\uq(z|u)}{p(z)}\right)^{1/\gamma-1} \dx{z} \dx{u}.
\end{multline*}

\section{Learning}

Let $\uq$ be parameterised by $\theta$. Under regularity conditions,
\begin{align*}
\fracpartial{\LB_\gamma}{\theta}
&=
\frac{1}{1-\gamma}
\int \left(\qd(z) - \qc(z) \right)\frac{\partial\log\uq(z)}{\partial\theta} \dx{z},
\end{align*}
where $\qd(z)\defeq\uqd(z)/\Zd$ and $\qc(z)\defeq\uqc(z)/\Zc$ are normalised distributions, and
we have used the log-derivative trick $\partial{\log \uq}/{\partial\theta} = (1/\uq)(\partial{\uq}/{\partial\theta})$.
At the optimal $\uq^*$, both $\qc(z)$ and $\qd(z)$ equates the exact Bayes posterior $p(z|D)$.
Setting the gradient 
to zero entails matching the expectations of the gradients of $\log \uq$ under 
$\qc$ and $\qd$.

Gradients for $\LBb_\gamma$ and $\LBbh_\gamma$ can be similarly expressed.

\subsection{Case Studies}

We study three cases of applying $\LB_{\gamma}$ analytically.
The first case where exact inference is possible is illustrative.
The other cases, where exact inference is not, demonstrate where using $\LB_\gamma$ can be useful.

\subsubsection{Exponential Family}

Consider $\Data$ to be a collection of $n$ independent data $\setof{\vec{x}_1,\ldots,\vec{x}_n}$ in 
the exponential family with the conjugate prior setting:
$p(\vec{x}_i | \vec{z} ) = h(\vec{x}_i) \exp\left(\vec{z}^{\T} \vec{t}(\vec{x}_i) - a(\vec{z})\right)$;
$p(\vec{z} | \vec{\nu}, \kappa) = g(\vec{\nu}, \kappa)\exp\left(\vec{z}^{\T}\vec{\nu} - \kappa a(\vec{z})\right)$;
and
$\uq(\vec{z} | \vec{\mu}, \lambda) = \exp\left(\vec{z}^{\T}\vec{\mu} - \lambda a(\vec{z})\right)$,
with $\vec{t}$ being the sufficient statistic, $\vec{z}$ the natural parameter, $a$ the log-partition function;
$\vec{\nu}$ and $\kappa$ the parameters for the prior;
and $\vec{\mu}$ and $\lambda$ the parameters for the posterior.
Then
$\partial\log \uq(\vec{z})/\partial\vec{\mu} = \vec{z}$ 
and
$\partial\log \uq(\vec{z})/\partial\nu = -a(\vec{z})$, and
\begin{align*}
\qc(\vec{z}) &\propto 
\exp\bigl(\vec{z}^{\T}\left(\vec{\mu}/\gamma + (1-1/\gamma)\vec{\nu}\right)
 - k(\kappa)a(\vec{z})/\gamma\bigr)
\\
\qd(\vec{z}) &\propto
\exp\bigl(\vec{z}^{\T}\left(\vec{\mu} + (1-\gamma)\textstyle\sum_{i=1}^n\vec{t}(\vec{x}_i)\right)
 - k(n)a(\vec{z})\bigr),
\end{align*}
where $k(\bullet) \defeq \lambda + (1-\gamma)\bullet$.
The sufficient statistics for conjugate distributions are $\vec{z}$ and $-a(\vec{z})$, so 
\begin{align*}
\E[\qc]{\vec{z}} &=\vec{\mu}/\gamma + (1-1/\gamma)\vec{\nu} &
\E[\qc]{-a(\vec{z})} &= k(\kappa)/\gamma \\
\E[\qd]{\vec{z}} &=\vec{\mu} + (1-\gamma)\textstyle\sum_{i}^n\vec{t}(\vec{x}_i) &
\E[\qd]{-a(\vec{z})} &= k(n).
\end{align*}
Zeroing gradients $\sfracpartial{\LB_\gamma}{\vec{\mu}}$ and $\sfracpartial{\LB_\gamma}{\lambda}$ 
gives the following parameters for $\uq$ as expected:
\begin{align*}
\vec{\mu} &= \vec{\nu} + \gamma\textstyle\sum_{i=1}^n\vec{t}(\vec{x}_i) &
\lambda &= \kappa + \gamma n.
\end{align*}
The parameters interpolate between the prior and the Bayes posterior, as a consequence
that exact inference is achievable.
This is not  true for more general models and approximate inference may be required. 

\subsubsection{Multinomial Data with Gaussian Prior}
\label{sec:multinomial}

In the previous case where exact posteriors can be obtained,
it is not necessary to derive the gradients, since we already know the functional form of these posteriors.

Where the exact fractional posterior could be complicated, we assume a functional form for the approximate posterior.
Consider the model with a multinomial logit likelihood for $C$ classes and a standard Gaussian prior:
\begin{align*}
p(x | \vec{z}) &= \frac{\exp z_x}{\sum_{c=1}^C  \exp z_c}; &
p(\vec{z}) &=
\prod_{c=1}^C \frac{1}{\sqrt{2\pi}}\exp -\frac{z_c^2}{2}. 
\end{align*}
For $n$ data points, 
we choose $1/\gamma=1+1/n$ and let
\begin{align*}
\uq(\vec{z}) &=\left(\prod_{c=1}^C\Normal(z_c | \mu_c, \sigma_c^2)\right)\left(\sum_{c=1}^C  \exp z_c\right)^{n/(n+1)}.
\end{align*}
Then
\begin{align*}
\qc(\vec{z}) 
&\propto \left(\textstyle\prod_{c=1}^C\Normal(z_c | m_c, s_c^2)\right)\left(\textstyle\sum_{c=1}^C  \exp z_c\right)\\
&= \textstyle\sum_{c=1}^C  \exp z_c \left(\textstyle\prod_{c'=1}^C\Normal(z_{c'} | m_{c'}, s_{c'}^2)\right)
\\
\qd(\vec{z}) &\propto \left(\textstyle\prod_{c=1}^C\Normal(z_c | \mu_c, \sigma_c^2)\right) \textstyle\prod_{i=1}^n \exp \left(z_{x_i}/(n+1)\right)\\
&\propto \textstyle\prod_{c=1}^C\Normal\left(z_c | \mu_c + \frac{n_c\sigma_c^2}{n+1}, \sigma_c^2\right),
\end{align*}
where 
\begin{align*}
m_c&\defeq \frac{\mu_c(n+1)}{n+1-\sigma_c^2} &
s_c^2 &\defeq \frac{n\sigma_c^2}{n+1-\sigma_c^2},
\end{align*}
and $n_c\defeq \sum_{i=1}^n \updelta(c, x_i)$ is the number of data points of class $c$.
The last expression of $\qc$ has normalising constant 
$\sum_{c=1}^C \exp(m_c + s_c^2/2)$, and the last expression for $\qd$ is normalised because it is a product of independent Gaussian distributions.
The gradients with respect to the parameters and the required expectations are given in section~\ref{sec:multinomial-extra}.
This is an example where we need only the unnormalised density $\uq$ during optimisation.

\subsubsection{Mixture Model}
\label{sec:gmm}
A common model in the Bayesian literature is the mixture model. 
For 
$n$ samples $\setof{x_i}_{i=1}^n$
and 
$K$ components with parameters $\setof{u_k}_{k=1}^K$ independently drawn from $p(u_k)$,
the evidence is 
$\sum_{\vec{c}} p(\vec{c}) \int p(\vec{u}) \prod_{i=1}^n p(x_i | c_i, \vec{u}) \dx{\vec{u}}$ 
\citep[Equation 9]{blei-jasa-2017-vi},
where $c_i\in\setof{1,\ldots,K}$ is the latent assigned cluster for the $i\nth$ sample,
and the $c_i$s are independent.
The outer sum over $K^n$ cluster assignments makes exact inference intractable.

Assume a mean-field approximation for the posterior: $q(\vec{u}, \vec{c}) = \prod_{k=1}^K q(u_k) \prod_{i=1}^nq(c_i)$.
Like ELBO, we may apply the $\LBb_{\gamma}$ bound to convert the innermost product to an outer sum for the likelihood term.
Alternatively, we can first apply $\LB_\gamma$ for the variational posterior $q(\vec{u})$ of the component means,
and then the ELBO for the cluster assignments $\vec{c}$,
so that the $\Zd$ term in \cref{eqn:LBg} is lower bounded by
\begin{align*}
\int
q(\vec{u}) \prod_{i=1}^n \prod_{c_i=1}^K \left(
p(x_i| c_i,\vec{u}) p(c_i) / q(c_i)
\right)^{(1-\gamma)q(c_i)}
\dx{\vec{u}}.
\end{align*}
Define the variational parameters $\varphi_{ik}\defeq q(c_i=k)$, $i=1,\ldots,n$, $k=1,\ldots,K$.
Simplifying with the mean-field independence assumption, the lower bound is (\cref{sec:gmm-deriv})
\begin{multline*}
\sum_{k=1}^K\biggl(
\frac{1}{1-\gamma}\sum_{i=1}^n\log \int q(u_k)
p(x_i|u_k)^{(1-\gamma)\varphi_{ik}}\dx{u_k}
\biggr.\\
-\sum_{i=1}^n \varphi_{ik}\log\left(\varphi_{ik}/p(c_i=k)\right)\\
\biggl.
-\frac{\gamma}{1-\gamma}\log\int q(u_k)^{1/\gamma}p(u_k)^{1-1/\gamma}\dx{u_k}
\biggr).
\end{multline*}
Identifying terms with $\LB_\gamma$, optimality gives
$q(u_k)\propto p(u_k)\prod_{i=1}^n p(x_i | u_k)^{\varphi_{ik} \gamma}$.
For $\varphi_{ik}$, we first define distributions
$q_i(u_k) \propto q(u_k) p(x_i | u_k)^{(1-\gamma)\varphi_{ik}}$,
which introduces the proportion of the $i\nth$ sample omitted in $q(u_k)$.
Then 
$\varphi_{ik}\propto
p(c_i)\exp\left(
\E[q_i]{\log p(x_i | u_k)}
\right)$ ---
in this way, $q(c_i)$ approximates the Bayes posterior by using the full contribution of likelihood due to $x_i$.
A full ELBO solution is obtained by setting $\gamma=1$ for the updates;
and in particular, $q_i(u_k)\equiv q(u_k)$ for all $i$.

If each conditional likelihood $p(x_i|u_k)$ is in the exponential family and the prior $p(u_k)$ is conjugate to it, 
then $q(u_k)$ is in the same conjugate family that have the sufficient statistics 
of the data weighted by $\varphi_{ik}$ and $\gamma$.
Consider the Gaussian mixture model of \citet[\S2.1]{blei-jasa-2017-vi}, where 
the component priors are identically normal with mean zero and variance $\sigma^2$;
the assignment priors are identically uniform;
and the likelihood is unit variance normal centered at $u_k$.
Then $q(u_k)$ is normal with mean and variance 
\begin{align*}
&\frac{\gamma\sum_{i=1}^n\varphi_{ik}x_i}{1/\sigma^2 + \gamma\sum_{i=1}^n\varphi_{ik}}
&
&\text{and}&
&\frac{1}{1/\sigma^2 + \gamma\sum_{i=1}^n\varphi_{ik}}.
\end{align*}
The same expressions are obtained from the approximate Bayes posterior $r(u_k)$
and the prior $p(u_k)$ using $r(u_k)^{\gamma}p(u_k)^{1-\gamma}$,
but the assignment probabilities $\varphi_{ik}$s within are different.
\Cref{sec:gmmdiff} illustrates the difference.

\section{Monte Carlo Estimates}
\label{sec:mcmcest}

If we have $\Ns$ samples $z_i$s from distribution $q(z) =\uq(z)/Z$ for known normalising constant $Z$,
we can use them to estimate $\Zc$ and $\Zd$ in \cref{eqn:LBg}.
If we only have the unnormalised density, then estimating $\LB_\gamma$ requires  the normalising factor $Z$.
An alternative is to introduce a mixing distribution and use the $\LBh_\gamma$ bound:
\begin{multline*}
\LBh_\gamma
\approx\textstyle\frac{1}{1-\gamma}\log \frac{1}{\Ns}\sum_i \sum_j p(\Data|z_{ij})^{1-\gamma}
\\ -\textstyle\frac{\gamma}{1-\gamma}\log \frac{1}{\Ns}\sum_i  \sum_j (q(z_{ij}|u_i)/p(z_{ij}))^{1/\gamma-1},
\end{multline*}
where there are now $\Ns'$ samples from $u_i\sim q(u)$ followed by $\Ns/\Ns'$ samples $z_{ij}\sim q( z | u_i)$.
In this setting, $\uq(z)$ need not be known explicitly, but we are required to be able to draw the $(u_i, z_i)$s samples  and to know the conditional $q(z|u)$ exactly.
This particular model for $q(z)$ is the semi-implicit hierarchical construction \citep{yin-icml-2018sivi}.

For $\LBbh_\gamma$, we also have $\Ns/\Ns'$ samples $z'_{ik}\sim r( z | u_i)$.
\Cref{sec:mcmcest-extra} provides more details.

\section{Experiments}
\label{sec:expts}

We provide three experiments.
The first uses analytical updates to infer the posteriors for a given model, a \emph{variational inference} task; 
the second and third, Monte Carlo sampling with the reparameterisation trick \citep{kingma-iclr-2014vae} to infer the posteriors and \emph{also} learn the hyperparameters of the model,
a \emph{variational learning} task.
For $\gamma=1.0$ we use the standard ELBO implementation directly. 
We refer the reader to \cref{tbl:notation} in section~\ref{sec:proofs} as a reminder of the bounds used in the paper and evaluated in this section.

\subsection{Calibration Study for Mixture Models}
\label{sec:calibration}

We evaluate the quality of the learnt fractional posteriors by examining the calibration diagnostics for a one-dimensional mixture model.
We use the Gaussian mixture model (GMM) of \citet[\S2.1]{blei-jasa-2017-vi},
where each observation is drawn with white noise (that is, variance $\sigmaobs^2=1$) 
from one of the $K$ components with equal probability,
and the component means have independent and identical Gaussian priors.
We infer posteriors over component means using variational inference on a set of observations.
For a given significance level $\alpha$,
actual coverage is the long-term frequency that the $1-\alpha$ credible interval from a posterior includes the true component mean.
The credible interval is calibrated when the actual coverage is $1-\alpha$.
It is known that the posteriors from ELBO are overconfident \citep{wang-aistats-2005}.

We use $K\!=\!2$ components centered at $\mu_1=-2$ and $\mu_2=2$,
and we compute the empirical coverage $\kappa$ over $5,000$ replicas of $n=400$ observations \citep[\S S2]{syring-biometrika-2018}.
For each replica,
we obtain the approximate fractional posteriors for $\gamma=0.1$ to $0.9$ in intervals of 0.1,
and the approximate Bayes posterior using ELBO  (see section~\ref{sec:gmm}).
Using $\alpha=0.05$,
we find that the posteriors from ELBO and $\gamma=0.9$ are overconfident, that is, $\kappa < 1-\alpha$;
and those from $\gamma\leq0.8$ are conservative (first set of results in Table~\ref{tbl:calibration}).
Moreover, the interval lengths $\ell$s decrease with $\gamma$.
These findings conform to our expectations of fractional posteriors,
and they demonstrate that optimising $\LB_\gamma$ gives approximate fractional posteriors with the intended properties.

To see the effect of the interval length $\ell$ on $\kappa$,
we measure the coverages when, for each replica, 
the component means are from the Bayes posterior but the component variances are from fractional posteriors
with $\gamma=0.1$, $0.5$ or $0.9$.
We find the coverages for such conflated models $C_\gamma$s match that of the corresponding fractional posteriors in general,
but for a minority of replicas changing the variances is insufficient (compare the coverages of $C_\gamma$s to $\LB_{\gamma}$s in Table~\ref{tbl:calibration}).

\begin{table}
\centering
\caption{Calibration study of GMM at $\alpha=0.05$ significance.
The empirical coverages $\kappa$ (higher is better) and average interval lengths $\ell$ (shorter better)
for each of the $\setof{\mu_1, \mu_2}$ means are shown.
The last column gives the bounds to $\Levd$ (higher better).
The first set of results is from modelling with different $\gamma$s ($\LB_{1.0}$ is ELBO).
Results for $\gamma\in\setof{0.2, 0.4, 0.6, 0.8}$ are omitted
for brevity, but the trend remains.
The second set is from conflated models combining
the means from ELBO and the variances from $\LB_{\gamma}$.
The third set is from calibrations of using the first set of results.
The $\gamma$ values for $R_\ell$ and $R_\kappa$ are $0.785$ and $0.798$.
}
\label{tbl:calibration}
\begin{tabular}{lccccc}\toprule
&\multicolumn{2}{c}{$\mu_1$} & \multicolumn{2}{c}{$\mu_2$} \\\cmidrule(lr){2-3}\cmidrule(lr){4-5}
 & $\kappa$ & $\ell$ & $\kappa$ & $\ell$ & bound \\\midrule
$\LB_{0.1}$ & $1.0000$ & $0.8515$ & $1.0000$ & $0.8987$ & $-832.5$ \\
$\LB_{0.3}$ & $0.9994$ & $0.4924$ & $0.9988$ & $0.5200$ & $-833.3$ \\
$\LB_{0.5}$ & $0.9876$ & $0.3816$ & $0.9860$ & $0.4029$ & $-834.0$ \\
$\LB_{0.7}$ & $0.9694$ & $0.3225$ & $0.9606$ & $0.3406$ & $-834.4$ \\
$\LB_{0.9}$ & $0.9438$ & $0.2845$ & $0.9334$ & $0.3004$ & $-834.8$ \\
$\LB_{1.0}$ & $0.9278$ & $0.2699$ & $0.9182$ & $0.2850$ & $-834.9$ \\[1ex]
$C_{0.1}$ & $1.0000$ & $0.8515$ & $1.0000$ & $0.8987$ & $-839.3$ \\
$C_{0.5}$ & $0.9878$ & $0.3816$ & $0.9858$ & $0.4029$ & $-834.5$ \\
$C_{0.9}$ & $0.9436$ & $0.2845$ & $0.9334$ & $0.3004$ & $-834.8$ \\[1ex]
$R_\ell$ & $0.9588$ & $0.3046$ & $0.9474$ & $0.3216$ & $-834.6$ \\
$R_\kappa$ & $0.9570$ & $0.3021$ & $0.9458$ & $0.3190$ & $-834.6$ \\
\bottomrule
\end{tabular}
\end{table}

Here, we investigate two calibration strategies, one using interval lengths $\ell$s and the other using coverages $\kappa$s.
Given our knowledge of the model, sans the locations of the components,
we expect $n/K$ observations per component, so their sample mean has variance $K\sigmaobs^2/n$.
Combining this with the critical value for $\alpha$ provides an interval length $\ell^*$ that we expect in the ideal case.
\emph{For each replica and each component $k$}, we perform linear regression on $\gamma$ against $\ell$ using the results from $\LB_\gamma$ 
to predict $\gamma_k^*$ at $\ell^*$.
We average the $\gamma_k^*$s to obtain $\gamma^*$ for that replica.
The model that optimises $\LB_{\gamma^*}$ in then computed, for each replica.
We call this $R_{\ell}$.

The other strategy, called $R_\kappa$, has to be performed \emph{after} computing the $\kappa$s over the replicas.
This regresses linearly $\gamma$ against $\kappa$ using the results from $\LB_\gamma$ to predict the $\gamma_k^*$ at 
$1-\alpha$ coverage, for each component $k$.
We then obtain a single $\gamma^*$ by averaging the $\gamma^*_k$s.
For each replica, the model that optimises $\LB_{\gamma^*}$ is then computed.

In this study, we find both $R_\ell$ and $R_\kappa$ to provide coverages close to $1-\alpha$ (last set of results in Table~\ref{tbl:calibration}).
For $R_\ell$, the average value of $\gamma^*$ is $0.78$, 
while for $R_\kappa$ the value is $0.80$.

In practice, when replications of data sets are not available, bootstrapping can be used \citep{syring-biometrika-2018}.
Both $R_\ell$ and $R_\kappa$ are shown to be effective, and which to use in practice will depend primarily on the nature of data collection.
Section~\ref{sec:calibration-extra} gives additional examples; 
here we note that analysis for $K>2$ components is complicated by the complex 
marginal likelihood landscape \citep{jin-nips-2016}.

\paragraph*{Bounds}
With current experimental settings, we perform importance sampling using 1,000 samples 
to estimate the log-evidence to be $-827.2$.
So, while $\LB_{0.1}$ is the tightest (last column in \cref{tbl:calibration}) in this scenerio,
it is still rather loose. \Cref{sec:gmm-is} provides more details.

\subsection{Variational Autoencoder}
\label{sec:vae}

Variational autoencoder (VAE) \citep{kingma-iclr-2014vae} provides a variational objective to learn the 
the encoder and decoder neural networks of an autoencoder.
Though there are many variations (for example, \citet{tomczak-aistats-2018, higgins-iclr-2017}),
we compare with the standard VAE since our aim is to investigate differences with ELBO.
VAE is a local latent variable model that separately applies ELBO to each datum.
This will be the same for our bounds.

We follow the experimental setup and the neural network models of \citet{ruthotto-gamm-2021} for the gray-scale MNIST dataset \citep{lecun-ieee-1998mnist}.
The latent space is two-dimensional, so we can inspect the posteriors visually.
We make three changes (see section~\ref{sec:vae-extra}):
most significantly we use the \emph{continuous Bernoulli distribution} \citep{loaiza-ganem-nips-2019} as the likelihood function.
We use four values of $\gamma$s:  1.0 (ELBO), 0.9, 0.5 and 0.1.

For the posterior family, we use an explicit normal distribution (following \citet{ruthotto-gamm-2021}) and also a semi-implicit distribution.
For the latter we use a three-layer neural network for the implicit distribution, similar to that used by \citet{yin-icml-2018sivi} (details in section~\ref{sec:vae-extra}).
The choice of posterior family affects only the encoder structure.

\begin{table*}\centering
\caption{Average log-evidences (higher better) over data samples, and its breakdown for VAE on MNIST data sets. 
We give the mean and \emph{three} standard deviations of these averages over ten experimental runs.
For Monte Carlo averages, 1,024 samples are used ($32\times32$ for semi-implicit posteriors). 
For $\gamma=1.0$, the figures are the same under \emph{Test using Objective} and \emph{Test using ELBO}. 
For the first eight rows, the columns under \emph{Test using ELBO} are \emph{solely} for diagnostics to understand the learnt posteriors using the same metrics: they are not performance measures.
For $\LBbh_\gamma$, we show the addition of the KL divergence and R\'{e}nyi divergence for \emph{Test using Objective}; and for \emph{Test using ELBO} we evaluate the Bayes posterior $r$.
}
\label{tbl:vae-mnist-evidences}
\begin{tabular}{lc*{7}{M}}\toprule
&&&\multicolumn{3}{c}{Test using Objective} &\multicolumn{3}{c}{Test using ELBO} \\\cmidrule(lr){4-6}\cmidrule(lr){7-9}
Objective & $\gamma$  &\text{Train (Total)} & \text{Total} & \text{data} & \text{div} & \text{Total} & \text{data} & \text{div} \\\midrule
$\LB_\gamma$
& 1.0 & 1614.3\tinysd{11.0} & 1583.2\tinysd{14.6} & 1588.5\tinysd{14.5} & 5.3\tinysd{0.2} & 1583.2\tinysd{14.6} & 1588.5\tinysd{14.5} & 5.3\tinysd{0.2} \\
& 0.9 & 1648.5\tinysd{5.1~} & 1639.3\tinysd{4.5~} & 1641.9\tinysd{4.8~} & 2.6\tinysd{0.4} & 1452.6\tinysd{48.0} & 1455.0\tinysd{48.0} & 2.4\tinysd{0.3} \\
& 0.5 & 1675.9\tinysd{5.0~} & 1672.8\tinysd{5.9~} & 1674.7\tinysd{6.0~} & 1.9\tinysd{0.3} & 1318.7\tinysd{42.1} & 1320.1\tinysd{42.2} & 1.4\tinysd{0.2} \\
& 0.1 & 1680.1\tinysd{2.9~} & 1677.2\tinysd{3.4~} & 1679.5\tinysd{3.4~} & 2.3\tinysd{0.3} & 1322.8\tinysd{49.0} & 1324.2\tinysd{49.1} & 1.3\tinysd{0.2} \\[1ex]
$\LBh_\gamma$
& 1.0 & 1639.6\tinysd{14.6} & 1609.6\tinysd{20.6} & 1614.6\tinysd{20.5} & 5.0\tinysd{0.3} & 1609.7\tinysd{20.6} & 1614.6\tinysd{20.5} & 5.0\tinysd{0.3} \\
& 0.9 & 1657.7\tinysd{6.1~} & 1647.8\tinysd{5.6~} & 1651.1\tinysd{5.6~} & 3.3\tinysd{0.3} & 1534.6\tinysd{57.0} & 1537.8\tinysd{57.1} & 3.2\tinysd{0.3} \\
& 0.5 & 1677.4\tinysd{4.1~} & 1674.4\tinysd{5.0~} & 1676.4\tinysd{5.0~} & 2.1\tinysd{0.2} & 1366.0\tinysd{62.0} & 1367.7\tinysd{62.2} & 1.7\tinysd{0.2} \\
& 0.1 & 1681.4\tinysd{2.7~} & 1678.7\tinysd{3.1~} & 1681.2\tinysd{3.2~} & 2.5\tinysd{0.2} & 1355.6\tinysd{37.7} & 1357.1\tinysd{37.8} & 1.6\tinysd{0.2} \\[1ex]
$\LBbh_\gamma$
&0.9  & 1636.2\tinysd{11.5} & 1608.8\tinysd{23.3} & 1613.4\tinysd{23.0} & 4.0\tinysd{0.2} + 0.6\tinysd{0.3} & 1607.7\tinysd{23.9} & 1613.4\tinysd{23.0} & 5.7\tinysd{1.0}\\
&0.5  & 1635.2\tinysd{10.5} & 1608.0\tinysd{25.4} & 1612.7\tinysd{25.2} & 4.3\tinysd{0.2} + 0.4\tinysd{0.2} & 1607.4\tinysd{25.8} & 1612.7\tinysd{25.3} & 5.3\tinysd{0.6}\\
&0.1  & 1635.5\tinysd{12.0} & 1607.5\tinysd{16.1} & 1612.4\tinysd{16.1} & 4.6\tinysd{0.1} + 0.3\tinysd{0.2} & 1607.3\tinysd{16.2} & 1612.4\tinysd{16.1} & 5.1\tinysd{0.2}\\
\bottomrule
\end{tabular}
\end{table*}

In VAE, the log-evidence $\Levd$ depends on the prior and the likelihood, which in turn depends on the learnt VAE decoder.
Hence, \emph{optimising the decoder parameters can be seen as an ML-II procedure} \citep{wang2019ss}.
Therefore, when comparing the evidence bounds after optimisation, we can only say that the optimised models provide certain guarantees on $\Levd$,
with higher bounds giving better guarantees.
For tightness of bounds, see \cref{sec:vae-extra} (\cref{tbl:vae-mnist-evidences-bound}).

We first look at the effect of $\gamma$ and posterior families, and
where $\LB_{\gamma}$ uses the explicit distributions and $\LBh_{\gamma}$ uses the semi-implicit distributions.
While the learning objectives are on the training set, we also examine them on the test set to assess generalisation. 
We observe smaller $\gamma$s gives higher final evidence bounds (third and fourth columns of \cref{tbl:vae-mnist-evidences}, first two sets of results),
showing that $\LB_{\gamma}$ and $\LBh_{\gamma}$ can be better than $\LELBO$.
This implies that using a range of $\gamma$s is useful for model selection, comparison and optimisation.
Moreover, $\LB_{0.9}$ with the simpler explicit posterior already gives higher bound than $\LBh_{1.0}$ (ELBO) with the semi-implicit posterior (compare their fourth columns),
illustrating that $\gamma$ is more impactful than the posterior family.
Nonetheless, for the same $\gamma$, the semi-implicit posterior family gives higher evidence bounds (compare the first two sets of results),
showing that $\LBh_\gamma$ is a viable approach to learning within the semi-implicit family.

The R\'{e}nyi and KL divergences generally increase with $\gamma$
(sixth and ninth columns in Table~\ref{tbl:vae-mnist-evidences}).
In particular, the trend for KL validates that we are learning fractional posteriors closer to the prior for smaller $\gamma$s.
The means of the explicit posterior distributions have also less spread for smaller $\gamma$s (see \cref{fig:vae-mnist-means} in section~\ref{app:experiments});
samples from these  distributions, which depends on the learnt variances, also demonstrate the same (\cref{fig:vae-mnist-scatter}, section~\ref{app:experiments}).
This also means that the data is less fitted for smaller $\gamma$s, which is generally shown from the data fit term in ELBO (eighth column in Table~\ref{tbl:vae-mnist-evidences}).

There are more clumps in samples from the semi-explicit posteriors than from the explicit ones (\cref{fig:vae-mnist-scatter}, section~\ref{app:experiments}),
demonstrating the mixing property of the former.
The samples from the implicit posteriors shows that the implicit distribution
for $\gamma=1.0$ (ELBO) are mostly concentrated (\cref{fig:vae-mnist-si-5x5-1.0}, section~\ref{app:experiments}),
suggesting frequent degeneracy to the delta distribution, in broad agreement to theory \citep{yin-icml-2018sivi}.
In contrast, those for $\gamma<1$, we find diverse samples in most cases (\crefrange{fig:vae-mnist-si-5x5-0.9}{fig:vae-mnist-si-5x5-0.1}).
This demonstrates that learning with our bounds is a viable alternative to other methods \citep{yin-icml-2018sivi, titsias-aistats-2019uivi, uppal-nips-2023livi}
to prevent collapse of the implicit distributions.

Because of the degeneracy of hierarchical posterior for the ELBO,
we have expected to find the results of $\LBh_{1.0}$ to be very similar to that of $\LB_{1.0}$.
However, this is not the case here.
We postulate that the different gradients and the additional implicit samples have led to
different learning dynamics and allow $\LBh_{1.0}$ to escape local optimas in the neural network parameter space.
The large variances in the train objectives across the ten experimental runs support this.

\paragraph{Bounds}
For a limited comparison on the tightness of the bounds with respect to a common $\Levd$,
we take a single run of $\LELBO$ for the explicit posterior and uses its decoder as the fixed decoder to train the
encoders (or posteriors) for $\LB_\gamma$. 
With smaller $\gamma$s, we obtain tighter bounds and posteriors closer to the prior.
Details are in \cref{sec:vae-extra}.

\begin{figure*}
\begin{subfigure}[b]{0.24\textwidth}\centering
\includegraphics[width=\textwidth]{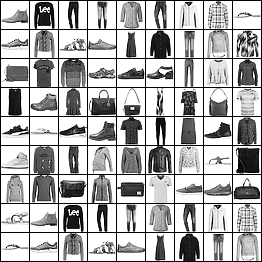}
\caption{Random test samples}
\end{subfigure}
\hfill
\begin{subfigure}[b]{0.24\textwidth}\centering
\includegraphics[width=\textwidth]{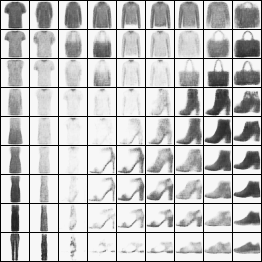}
\caption{$\LB_\gamma$, $\gamma=1$; or ELBO}
\end{subfigure}
\hfill
\begin{subfigure}[b]{0.24\textwidth}\centering
\includegraphics[width=\textwidth]{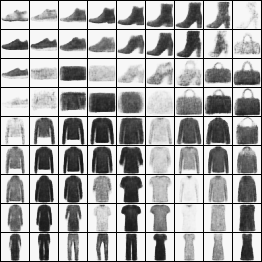}
\caption{$\LB_\gamma$, $\gamma=10^{-5}$}
\end{subfigure}
\hfill
\begin{subfigure}[b]{0.24\textwidth}\centering
\includegraphics[width=\textwidth]{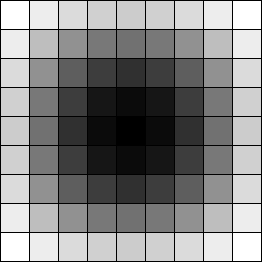}
\caption{Prior distribution}
\end{subfigure}
\caption{We train VAEs on the Fashion-MNIST dataset using $\LB_\gamma$ for different $\gamma$s.
We obtain mean images from the decoding latent variables that are systematically sampled by coordinate-wise inverse-CDF (standard normal) transform from a unit square. 
Fig.~b shows the images using the Bayes posterior (learnt with ELBO), and Fig.~c shows those using a fractional posterior very close to the prior.
The last image is the heat map of the corresponding prior densities. 
}
\label{fig:vae-fminst-samples}
\end{figure*}

\subsubsection{Comparing $\LBbh_\gamma$ with $\LBh_\gamma$}
\label{sec:compareLBbh}

We examine the joint learning of Bayes posterior $r$ and a fractional posterior $q$ using the bound $\LBbh_\gamma$,
where the posteriors are in the same semi-implicit family.
We compare with using $\LBh_\gamma$, in both the bounds and the posteriors.
The neural network settings are the same as for $\LBh_\gamma$.

The train and test objectives of $\LBbh_{\gamma}$ do not perform better than those from $\LBh_\gamma$ (and $\LB_\gamma$ for $\gamma\neq1.0$; 
last three rows in Table~\ref{tbl:vae-mnist-evidences}),
even though there are more parameters --- those for the Bayes posterior $r$.
In particular, they do not perform better than $\LBh_{1.0}$, as would be suggested by \cref{thm:LBbhIsWorse};
but we qualify that the decoders and hence the probabilistic models are probably different.

We find $\KL{r}{q}$ to be large, and it increases with smaller $\gamma$, as expected (first summand in the sixth column in \cref{tbl:vae-mnist-evidences}).
When we evaluate the Bayes posteriors $r$s with ELBO,
we find that they are competitive with those obtained by directly optimising $\LBh_{1.0}$ (seventh column).

Comparing the R\'{e}nyi divergences of the fractional posteriors learnt with $\LBbh_{\gamma}$
to those learnt with $\LBh_{\gamma}$ (sixth column in Table~\ref{tbl:vae-mnist-evidences}),
we find those learnt with $\LBbh_{\gamma}$ significantly closer to the prior.
This shows that the fractional posteriors from $\LBbh_{\gamma}$ are constrained significantly by
the Bayes posteriors when learnt jointly (see third paragraph in section~\ref{sec:bpost}).

\subsection{Improving VAE Decoder via Fractional Posteriors}
\label{sec:decoder}

The decoder of VAE is learnt with latent samples from the encoder.
An encoder from a fractional posterior gives samples closer to the prior than the Bayes posterior.
Hence, when we generate images from the VAE decoder using samples from the prior---that is, 
without using the encoder that need an input data---we expect the decoder learnt with a smaller $\gamma$ to provide better images.

We illustrate this with the Fashion-MNIST dataset \cite{xiao-arxiv-2017-fmnist}, 
training with $\LB_\gamma$ for $\gamma$ taking values 
$1.0$ (for Bayes posterior), $10^{-1}$, $10^{-3}$ and $10^{-5}$ (for fractional posterior close to prior).
Using latent samples from the prior, 
the decoder trained with the Bayes posterior provides images that are of lower quality
then the decoder trained with the fractional posterior with $\gamma=10^{-5}$
(\cref{fig:vae-fminst-samples}; and \cref{fig:vae-fminst-samples-extra} in section~\ref{sec:decoder-extra}).
To quantify, 
we generate 10,000 images from each trained decoder and 
measure their Fr\'{e}chet inception distances (FIDs, \citealt{martin-nips-2017-fid, Seitzer2020FID}) to the test set:
with decreasing $\gamma$, the distances are 83.5, 69.5, 67.8 and 68.8 (smaller is better).
This shows that fractional posteriors can train better decoders for generative modelling.

The $\beta$-VAE objective \citep{higgins-iclr-2017} with the appropriate parameters also gives fractional posteriors.
However, this objective can be unstable during optimisation, especially when we seek fractional posteriors very close to the prior.
For the same fractional posteriors with $\gamma$ set to $10^{-1}$, $10^{-3}$ and $10^{-5}$, we obtain FIDs 77.3, 334.7 and 342.3.
\Cref{sec:decoder-extra} provides the details.

\section{Related Work}

\emph{Generalised variational inference} \citep{knoblauch-jmlr-2022}
provides an optimisation framework that generalises ELBO.
Being generic, one need to concretise the individual terms before applying to specific cases.
One example is $\beta$-VAE \citep{higgins-iclr-2017} for learning disentangled representations in variational autoencoders (VAEs):
it weighs the divergence term more heavily.
By construction, the $\beta$-VAE bound is provably not tighter than ELBO, 
and optimising it gives a fractional posterior.
The importance weighted ELBO is also not tighter than ELBO \citep[eq. 8]{domke-nips-2018-iwvi}.
In contrast, the variational R\'{e}nyi bound \citep{li-nips-2016} can be tighter than ELBO, and 
optimising it gives the Bayes posterior.
This paper provides a bound that 
can be better than ELBO, especially for simpler assumed family of distributions,
and optimising it gives a fractional posterior.
We show this with the calibration and VAE study.

In a standard VAE, the decoder is trained with samples from the posteriors encoded using the training data, 
but these are unavailable for pure generative tasks.
Current approaches overcome this by learning a prior that is accessible during generation,
with the objective for matching the prior to the posteriors 
\citep{makhzani-iclr-2016, tomczak-aistats-2018, bahien-nips-2021}.
The alternative is to train the decoder via a distribution close to the prior.
While the $\beta$-VAE implies such a distribution, its looser bound suggests that the decoder parameters may be learnt suboptimally.
Our approach uses fractional posteriors and gives bounds higher than ELBO empirically.

Variational inference can be seen as  \emph{intentional} model misspecification \citep{chen-aas-2018}.
Fractional posteriors is one approach to overcome misspecification \citep{grunwald-isba-2017}.
Such posteriors can be obtained by sampling with down-weighted likelihood; 
or one can adjust the scale parameter of the Bayes posterior \cite{syring-biometrika-2018}.
Alternatively, one can optimise the $\beta$-VAE objective \cite{alquier-jmlr-2016, higgins-iclr-2017}.
We have provided an alternative variational approach to approximate fractional posteriors,
and we have demonstrated calibration using them within regression procedures.
More complex calibration procedures \citep{grunwald-isba-2017, syring-biometrika-2018} can be explored.

\section{Discussions and Limitations}

\paragraph{Misspecification}
If the prior and likelihood are correctly given, and if exact inference is possible, then in principle a Bayesian only needs to compute the exact Bayes posterior.
This seldom happens in practice \cite{faden-1976}.
If either the prior or likelihood or both are misspecified, then post-Bayesianism efforts, such as generalised Bayes \cite{knoblauch-jmlr-2022}, robust Bayesian \cite{miller-2019} and PAC-Bayes \citep{masegosa-2020,morningstar-2022}, seek to ameliorate the situation.
This paper does not directly address the goals of post-Bayesianism.
It has not evaluated when either the likelihood or the prior is misspecified.
Although those for MNIST and Fashion-MNIST are most probably misspecified, we have not compared to when they are not.
We rely on the works of others to address such goals.
Nonetheless, we make a two connections here.

First, using fractional posteriors is a proposed solution for misspecification \citep{grunwald-isba-2017,bhattacharya-aos-2019,medina-2022}, and our objective does this naturally through a lower bound on the log-evidence.
While the objective is not as intuitive as, say, the $\beta$-VAE objective, a lower bound like $\LB_\gamma$ that can be tighter than ELBO can help in selecting or optimising appropriately parameterised priors in an empirical Bayes manner \citep{berger-1986}.
Second, our objective uses the R\'{e}nyi divergence to quantify the closeness of the prior to the posterior, and this divergence is well-behaved for robustness to prior misspecification \citep[\S5.2.1]{knoblauch-jmlr-2022}.

When exact inference is not possible, we can use approximate inference by way of variational optimisation, which is the main alternative to Monte Carlo approaches.
When the assumed variational family does not include the exact Bayes posterior, some---but not all---also consider this misspecification \cite{chen-aas-2018,knoblauch-jmlr-2022}.
This paper addresses this by expanding the possibility afforded by the conventional ELBO, so that we may also have approximate fractional posteriors as the optimal solutions.
At this point, there is no single recipe to select $\gamma$ (\cref{sec:choosegamma});  we opine that this should be application dependent.
For example in \cref{sec:calibration}, the best $\gamma$s are selected for calibration and not for the tightness of the corresponding bounds.

\paragraph{Posterior collapse}
With small $\gamma$, we might seem to be encouraging posterior collapse \cite{wang-2021}.
However, \cref{fig:posterior-not-collapse} for $\gamma=0.1$ demonstrates that while the fractional posteriors as a whole aggregate towards the prior, 
the posterior for every data point is different.
This topic demands more investigation and discussion than possible here.

\paragraph{Limitations} 
We identify three limitations with $\LB_\gamma$.
First, the conventional ELBO using the KL divergence is usually more mathematically elegant and convenient.
This is because it can convert a log-sum (or log-integral) to a sum-log (or integral-log), and a sum or integral is neccessary for marginalisation.
In particular, the variational inference for the mixture model must rely on the ELBO at some point (\cref{sec:gmm}).
Second, the R\'{e}nyi divergence is finite in less cases than the KL \citep{gil-2013}.
It may be necessary to consider this when optimising the parameters of the approximate posteriors,
though we have not needed to do this for the presented experiments.
Third, when using Monte Carlo estimates, more than one sample is required to be effective (\cref{sec:mcmcest-1}).
This can be unrealistic for huge datasets.

\paragraph{Power posteriors} 
This paper focuses on estimating an approximate fractional posteriors with the lower bound $\LB_\gamma$ on the log-evidence.
For power posteriors with $\gamma>1$, we also have lower bounds (\cref{thm:upperbound-power}).
Similarly, there is no fixed criterion to select such $\gamma$s (\cref{sec:choosegamma}). 
Estimating these posteriors involves maximising the lower bounds,
which should be achievable by methods presented here.

\section{Conclusions}

We have presented a novel one-parameter variational lower bound for the evidence.
Maximising the bound within an assumed family of distributions
estimates approximate fractional posteriors.
We have given analytical updates for approximate inference in two intractable models.
Empirical results for calibration and VAE show the utility of our approach.
For the Fashion-MNIST dataset, VAE decoders learnt with our approach can generate better images.

\section*{Acknowledgements}
We thank Xuesong Wang and Rafael Oliveira for their inputs, and the reviewers for their constructive comments.
This research was conducted while Chai was visiting CSIRO's Data61,
and 
the Machine Learning and Data Science Unit in the Okinawa Institute of Science and Technology (OIST).

\section*{Impact Statement}
This paper presents work whose goal is to advance the field of Machine Learning.
There are many potential societal consequences of our work, none which we feel must be specifically highlighted here.

%\bibliography{VLFP}

\newpage
\appendix
\onecolumn
\crefalias{section}{appendix}

\section{Proofs}
\label{sec:proofs}

This section collects the proofs for the main paper.
For reference, Table~\ref{tbl:notation} lists the bounds used in this paper.

\begin{table}\centering
\caption{List of lower bounds. Some are expressed differently from the main paper to ease comparison among the bounds.
The $\beta$-VAE objective is the weighted KL divergence \citep[\S B.3.1]{knoblauch-jmlr-2022}.}
\label{tbl:notation}
\begin{tabular}{>{\raggedright\hangindent=0.8em}p{4.5cm}MD}\toprule
Description & \text{Notation} & \text{Expression} \\\midrule
Log-evidence  & \Levd & 
\log p(D) = \log \int p(z) p(D|z) \dx{z} 
\\[8ex]
ELBO (evidence lower bound) & \LELBO & 
\int q(z) \log p(D|z) \dx{z} - \int q(z) \log \frac{q(z)}{p(z)} \dx{z}
\\[8ex]
Weighted KL divergence
($\beta$-VAE objective) & \Lbeta_{\beta} &
\int q(z) \log p(D|z) \dx{z} - \beta\int q(z) \log \frac{q(z)}{p(z)} \dx{z}
\\[8ex]
Variational R\'{e}nyi bound & \LRenyi_{\alpha} &
\frac{1}{1-\alpha} \log  \int q(z) \left( \frac{p(\Data|z)\,p(z) }{q(z)}\right)^{1-\alpha}\dx{z}
\\[8ex]
Our primary bound  &  \LB_{\gamma}  & 
\frac{1}{1-\gamma}\log\int \uq(z) p(D|z)^{1-\gamma} \dx{z}
-\frac{\gamma}{1-\gamma}\log\int\uq(z) \left(\frac{\uq(z)}{p(z)}\right)^{1/\gamma-1} \dx{z}
\\[8ex]
$\bullet$ with hierarchical fractional posterior &  \LBh_{\gamma}  &
\begin{multlined}[t]
\frac{1}{1-\gamma}\log\iint \uq(z|u) q(u) p(D|z)^{1-\gamma} \dx{z} \dx{u}\hspace{4cm}
\\[-2ex]
- \frac{\gamma}{1-\gamma} \log \iint  \uq(z | u) q(u) \left(\frac{\uq(z | u)}{p(z)}\right)^{1/\gamma-1} \dx{z}\,\dx{u}
\end{multlined}
\\[12ex]
$\bullet$ with Bayes posterior & \LBb_{\gamma} &
\begin{multlined}[t]
\int r(z) \log p(\Data|z) \dx{z}
-
\frac{1}{1-\gamma}\int r(z) \log \frac{r(z)}{\uq(z)} \dx{z}\hspace{2.5cm}
\\[-2ex]
-\frac{\gamma}{1-\gamma}\log\int\uq(z) \left(\frac{\uq(z)}{p(z)}\right)^{1/\gamma-1} \dx{z}
\end{multlined}
\\[12ex]
$\bullet$ with hierarchical fractional and Bayes posteriors & \LBbh_{\gamma} &
\begin{multlined}[t]
\iint  r(z|u) q(u) \log p(\Data|z)\,\dx{z} \dx{u}\hspace{4.5cm}
\\[-2ex]
- \frac{1}{1-\gamma}
\iint  r(z|u) q(u) \log \frac{r(z|u)}{\uq(z|u)} \dx{z} \dx{u}
\\
-\frac{\gamma}{1-\gamma}\log \iint  \uq(z|u) q(u) \left(\frac{\uq(z|u)}{p(z)}\right)^{1/\gamma-1} \dx{z} \dx{u}
\end{multlined}
\\[16ex]
$\bullet$ with hierarchical fractional and Bayes posteriors (alternative bound; see section~\ref{sec:LBbhproof}) &  \LBbh_{\gamma}\text{-alt}&
\begin{multlined}[t]
\iint  r(z|u) q(u) \log p(\Data|z)\,\dx{z} \dx{u}\hspace{4.5cm}
\\[-2ex]
- \frac{1}{1-\gamma}
\iiint  r(z|u) q(u) q(u') \log \frac{r(z|u)}{\uq(z|u')} \dx{z} \dx{u} \dx{u'}
\\
-\frac{\gamma}{1-\gamma}\log \iint  \uq(z|u) q(u) \left(\frac{\uq(z|u)}{p(z)}\right)^{1/\gamma-1} \dx{z} \dx{u}
\end{multlined}
\\
\bottomrule
\end{tabular}
\end{table}

\begin{lemma}
$\Levd$ is lower bounded the same expression as \cref{eqn:LBg}, but with
$\gamma>1$.
\label{thm:upperbound-power}
\end{lemma}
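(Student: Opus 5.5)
The plan is to reverse the roles of the two normalisers in the H\"{o}lder argument behind \cref{eqn:LBg}. For $\gamma>1$ we have $1-\gamma<0$, so multiplying the target inequality by $1-\gamma$ flips it. Multiplying $\Levd \geq \frac{1}{1-\gamma}\log\Zd - \frac{\gamma}{1-\gamma}\log\Zc$ by $\gamma-1>0$ shows the claim is equivalent to
\begin{align*}
\gamma\log\Zc \leq \log\Zd + (\gamma-1)\Levd,
\qquad\text{that is,}\qquad
\Zc \leq \Zd^{1/\gamma}\, p(\Data)^{(\gamma-1)/\gamma}.
\end{align*}
Here $\Zd=\int\uq(z)\,p(\Data|z)^{1-\gamma}\dx{z}$ and $\Zc=\int\uq(z)^{1/\gamma}p(z)^{1-1/\gamma}\dx{z}$ are as in \cref{eqn:LBg}, with the same definitions now read for $\gamma>1$. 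So the task is to upper bound $\Zc$, rather than lower bound $p(\Data)$ as in the main derivation.

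The key step is to factor the integrand of $\Zc$ so that the likelihood powers cancel:
\begin{align*}
\uq(z)^{1/\gamma}p(z)^{1-1/\gamma}
= \left(\uq(z)\,p(\Data|z)^{1-\gamma}\right)^{1/\gamma}\left(p(\Data|z)\,p(z)\right)^{1-1/\gamma}.
\end{align*}
This holds because the exponents of $p(\Data|z)$ are $(1-\gamma)/\gamma + (\gamma-1)/\gamma = 0$. Since $\gamma>1$, the numbers $1/\gamma$ and $1-1/\gamma$ both lie in $(0,1)$ and sum to one. H\"{o}lder's inequality with conjugate exponents $\gamma$ and $\gamma/(\gamma-1)$, applied under Lebesgue measure, then gives
\begin{align*}
\Zc \leq \left(\int\uq(z)\,p(\Data|z)^{1-\gamma}\dx{z}\right)^{1/\gamma}\left(\int p(\Data|z)\,p(z)\dx{z}\right)^{1-1/\gamma}.
\end{align*}
The right-hand side is exactly $\Zd^{1/\gamma}p(\Data)^{(\gamma-1)/\gamma}$. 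Taking logarithms and rearranging, dividing by $\gamma-1>0$, recovers the same expression as \cref{eqn:LBg} as a lower bound on $\Levd$.

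The main obstacle is not the algebra but the integrability and support conditions, which differ from the case $\gamma<1$ because $p(\Data|z)^{1-\gamma}$ is now a negative power. I would state the following conditions explicitly.
\begin{itemize}
\item We need $\supp\uq\subseteq\supp p$, as before, so that $\Zc$ is well defined.
\item We need $p(\Data|z)>0$ on $\supp\uq$; otherwise $\Zd=\infty$, and the bound holds trivially because $\frac{1}{1-\gamma}\log\Zd=-\infty$.
\item If $\Zc=\infty$, then H\"{o}lder's inequality forces $\Zd=\infty$ as well. I would exclude this degenerate case by assuming both $\Zc$ and $\Zd$ are finite.
\end{itemize}

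Finally, I would check tightness as a sanity check. Equality in H\"{o}lder's inequality holds when the two factors raised to their respective powers are proportional, $\uq\,p(\Data|z)^{1-\gamma}\propto p(\Data|z)\,p(z)$. This gives $\uq^*(z)\propto p(\Data|z)^{\gamma}p(z)$, the power posterior with $\gamma>1$, mirroring the fractional-posterior optimum in the main text.
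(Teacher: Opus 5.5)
Your proof is correct, and it establishes the same inequality as the paper's by a more elementary route. The paper's proof is one line: it invokes the \emph{reverse} H\"{o}lder inequality $\E{|XY|}\geq\E{|X|^{1/\beta'}}^{\beta'}\E{|Y|^{1/\gamma'}}^{\gamma'}$ with $\beta'=1-\gamma<0$ and $\gamma'=\gamma$. It keeps the same $\E{\cdot}$, $X=p(\Data|z)^{1-\gamma}$ and $Y=\uq(z)/p(z)$ as in the derivation of \cref{eqn:LBg}. This reads $\Zd\geq p(\Data)^{1-\gamma}\Zc^{\gamma}$, and taking logarithms and dividing by $1-\gamma<0$ gives the bound.

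Your factorisation of the integrand of $\Zc$ amounts to writing $|Y|^{1/\gamma}=|XY|^{1/\gamma}|X|^{-1/\gamma}$ and applying ordinary H\"{o}lder with exponents $\gamma$ and $\gamma/(\gamma-1)$. That is exactly the textbook derivation of the reverse inequality from the forward one, so both routes prove $\Zc\leq\Zd^{1/\gamma}p(\Data)^{1-1/\gamma}$. Your version is self-contained, avoids negative H\"{o}lder exponents, and makes the equality case immediate: the power posterior $\uq^*\propto p(\Data|z)^{\gamma}p(z)$. The paper's version keeps a single template in which \cref{eqn:LBg}, this lemma and \cref{thm:upperbound-inverse} differ only in the choice of $\beta'$.

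One small correction to your side conditions. For $\gamma>1$ the exponent $1-1/\gamma$ on $p(z)$ is positive, so $\Zc$ is well defined without $\supp\uq\subseteq\supp p$. The bound also still holds without that condition, since mass of $\uq$ outside $\supp p$ contributes nothing to $\Zc$ and only increases $\Zd$. The condition that genuinely changes character for $\gamma>1$ is the one you flag: zeros of $p(\Data|z)$ where $\uq>0$.
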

\begin{proof}
Similar to the proof for $\gamma\in(0,1)$, but we use the \emph{reverse} H\"{o}lder's inequality in the manner of
$\E{|X|^{1/\beta'}}^{\beta'} \leq \E{|XY|} / \E{|Y|^{1/\gamma'}}^{\gamma'}$,
where $\beta'+\gamma'=1$ and $\beta'<0$, with the same expressions for $\E{\cdot}$, $X$ and $Y$.
Set $\gamma\equiv\gamma' = 1-\beta' > 0$.
\end{proof}

\begin{lemma}
$\Levd$ is upper bounded the same expression as \cref{eqn:LBg}, but with
$\gamma<0$.
\label{thm:upperbound-inverse}
\end{lemma}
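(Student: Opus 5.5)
The plan mirrors the proof of \cref{eqn:LBg} and of \cref{thm:upperbound-power}, this time with the reverse H\"{o}lder inequality in the regime where the exponent on $X$ exceeds one. Keep $\E{\cdot}\defeq\int p(z)\cdot\dx{z}$, $X\defeq p(\Data|z)^{\beta}$ and $Y\defeq\uq(z)/p(z)$, and set $\beta\defeq 1-\gamma$. Since $\gamma<0$ we have $\beta>1$.

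The key tool is the reverse H\"{o}lder inequality. For $0<s<1$ and conjugate exponent $s'=s/(s-1)<0$ it reads $\E{|XY|}\geq\left(\E{|X|^{s}}\right)^{1/s}\left(\E{|Y|^{s'}}\right)^{1/s'}$. This requires $Y>0$ almost everywhere on $\supp p$, so that the negative power is finite. Take $s=1/\beta\in(0,1)$. Then $s'=1/(1-\beta)=1/\gamma$, so the second factor is $\left(\E{|Y|^{1/\gamma}}\right)^{\gamma}$. This gives
\begin{align*}
\int \uq(z)\,p(\Data|z)^{1-\gamma}\dx{z}
\;\geq\;
\left(\int p(z)\,p(\Data|z)\dx{z}\right)^{1-\gamma}
\left(\int p(z)\left(\uq(z)/p(z)\right)^{1/\gamma}\dx{z}\right)^{\gamma},
\end{align*}
that is, $\Zd\geq p(\Data)^{1-\gamma}\,\Zc^{\gamma}$, with $\Zd$ and $\Zc$ exactly as defined after \cref{eqn:LBg}.

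Next take logarithms: $(1-\gamma)\Levd\leq\log\Zd-\gamma\log\Zc$. Divide by $1-\gamma$, which is strictly positive for $\gamma<0$, so the direction of the inequality is preserved. This yields $\Levd\leq\frac{1}{1-\gamma}\log\Zd-\frac{\gamma}{1-\gamma}\log\Zc$, the same expression as $\LB_\gamma$ but now an upper bound. The contrast with \cref{thm:upperbound-power} comes from which factor carries the negative conjugate exponent. There it was on the $X$ side, giving a lower bound. Here it is on the $Y$ side, which flips the inequality relative to the original H\"{o}lder bound; the division by $\beta$ keeps it flipped.

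The main obstacle is the side conditions of the reverse H\"{o}lder inequality rather than the algebra. Because $1/\gamma<0$, we need $\uq>0$ wherever $p>0$, i.e.\ $\supp p\subseteq\supp\uq$. This is the opposite inclusion to the one required for $\gamma\in(0,1)$, and it must be stated as a hypothesis. We also need $0<\Zc<\infty$, and $\Zd$ either finite or equal to $+\infty$, in which case the bound holds trivially. I would state these conditions explicitly, note the degenerate cases, and then invoke the standard reverse H\"{o}lder inequality, which itself follows from the ordinary H\"{o}lder inequality applied to $|X|^{s}=|XY|^{s}\,|Y|^{-s}$ with exponents $1/s$ and $1/(1-s)$.
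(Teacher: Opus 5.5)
Your proposal is correct and is essentially the paper's proof: the paper likewise applies the reverse H\"{o}lder inequality with the same $\E{\cdot}$, $X$, $Y$, taking $\beta'=1-\gamma>1$. The negative conjugate exponent $1/\gamma$ then falls on $Y=\uq/p$, and dividing the logarithm by $\beta'>0$ gives an upper bound.

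You also make explicit the side conditions the paper leaves implicit: positivity of $\uq$ on $\supp p$, and finiteness of $\Zc$ and $\Zd$. Note that violating the support condition only makes the right-hand side $+\infty$ (when $\Zd>0$). So that condition is needed for a nontrivial bound, not for the bound to hold.
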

\begin{proof}
Similar to the proof for \cref{thm:upperbound-power}, but now we use $\beta'>1$,
so that $\gamma\equiv\gamma' = 1-\beta' < 0$.
\end{proof}

\begin{lemma}
\label{thm:limELBO}
$\lim_{\gamma\rightarrow 1} \LB_{\gamma} = \LELBO$.
\end{lemma}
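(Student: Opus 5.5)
The plan is to treat the two terms of $\LB_\gamma$ in \cref{eqn:LBg} separately, with $\uq=q$ a normalised distribution. The data term $\frac{1}{1-\gamma}\log\Zd$ will be handled by L'H\^{o}pital's rule. The complexity term $-\frac{\gamma}{1-\gamma}\log\Zc$ will be handled by the known convergence of R\'{e}nyi divergence to KL. Throughout, I assume enough regularity (finite moments of $\log p(\Data|z)$ under $q$, and finiteness of $\D{1/\gamma}{q}{p}$ near $\gamma=1$) to differentiate under the integral sign and apply dominated convergence.

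\emph{Data term.} I write $\Zd(\gamma)=\int q(z)\,p(\Data|z)^{1-\gamma}\dx{z}$, which satisfies $\Zd(1)=\int q=1$.
\begin{itemize}
\item Both $\log\Zd(\gamma)$ and $1-\gamma$ vanish at $\gamma=1$, so the term is of the form $0/0$.
\item L'H\^{o}pital's rule then gives
\[
\lim_{\gamma\to1}\frac{\log\Zd(\gamma)}{1-\gamma}=\lim_{\gamma\to1}\frac{\Zd'(\gamma)/\Zd(\gamma)}{-1}.
\]
\item Differentiating under the integral gives $\Zd'(\gamma)=-\int q(z)\,p(\Data|z)^{1-\gamma}\log p(\Data|z)\,\dx{z}$.
\item At $\gamma=1$ the limit is therefore $\int q(z)\log p(\Data|z)\,\dx{z}$, the expected log-likelihood term of $\LELBO$.
\end{itemize}

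\emph{Complexity term.} Set $\alpha=1/\gamma$, so that $\alpha\to1$ as $\gamma\to1$.
\begin{itemize}
\item Then $\Zc=\int q^{\alpha}p^{1-\alpha}$, and $\frac{\gamma}{1-\gamma}=\frac{1}{\alpha-1}$.
\item Hence $-\frac{\gamma}{1-\gamma}\log\Zc=-\frac{1}{\alpha-1}\log\int q^\alpha p^{1-\alpha}=-\D{\alpha}{q}{p}$, as already noted after \cref{eqn:LBg}.
\item The standard limit $\lim_{\alpha\to1}\D{\alpha}{q}{p}=\KL{q}{p}$ then yields $-\int q\log(q/p)$.
\item For self-containment, the same L'H\^{o}pital argument can be run directly. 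Let $g(\alpha)=\log\int q^\alpha p^{1-\alpha}$. Then $g(1)=0$, and $g'(1)=\int q\log(q/p)$.
\end{itemize}

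Adding the two limits gives $\LELBO$. The only delicate step is justifying the interchange of limit, derivative and integral. For this I would require that $p(\Data|z)^{1-\gamma}|\log p(\Data|z)|$ and $q^{\alpha}p^{1-\alpha}|\log(q/p)|$ be dominated by a $q$-integrable or $p$-integrable envelope for $\gamma$ in a neighbourhood of $1$. I would state this as the regularity condition rather than prove it in general.
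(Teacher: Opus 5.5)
Your proposal is correct and follows essentially the same route as the paper: L'H\^{o}pital's rule on the data term $\frac{1}{1-\gamma}\log\Zd$, and the R\'{e}nyi-to-KL convergence for $-\frac{\gamma}{1-\gamma}\log\Zc = -\D{1/\gamma}{q}{p}$. Your restriction to a normalised $\uq=q$ loses nothing, because rescaling $\uq$ by a constant $c$ changes $\LB_\gamma$ by $\frac{1}{1-\gamma}\log c - \frac{\gamma}{1-\gamma}\cdot\frac{1}{\gamma}\log c = 0$.
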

\begin{proof}
The data term in $\LB_\gamma$ converges to the expectation of the log likelihood under approximate posterior $q$ when we
apply the L'H\^{o}pital's rule:
\begin{align*}
\lim_{\gamma\rightarrow1} 
\frac{\log \int  \uq(z)\,p(\Data|z)^{1 - \gamma} \dx{z}}{1-\gamma}
&=
\lim_{\gamma\rightarrow1} 
\frac{\int  \uq(z)\,p(\Data|z)^{1 - \gamma}\log p(\Data|z) \dx{z}}{\int  \uq(z)\,p(\Data|z)^{1 - \gamma} \dx{z}}
\\&=
\frac{\int  \uq(z)\log p(\Data|z) \dx{z}}{\int  \uq(z) \dx{z}}
\\&=\int  q(z)\log p(\Data|z) \dx{z}
\end{align*}
Moreover, as $\gamma\rightarrow 1$, the R\'{e}nyi divergence converges to the KL divergence \citep{vanErven-tit-2014}.
\end{proof}

\begin{lemma}
For a fix approximate Bayes posterior, $\LBb_\gamma \leq \LELBO$.
\label{thm:LBbhIsWorse}
\end{lemma}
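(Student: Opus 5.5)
The plan is to fix $r$ and maximise $\LBb_\gamma$ over $\uq$. Whatever value this maximum takes, it is an upper bound on $\LBb_\gamma$ for every $\uq$, so it suffices to show that the maximum equals $\LELBO$ evaluated at $q=r$. Write
\begin{align*}
\LBb_\gamma = \int r(z)\log p(\Data|z)\dx{z} - \frac{1}{1-\gamma}\left(\int r(z)\log \frac{r(z)}{\uq(z)}\dx{z} + \gamma\log \Zc\right),
\end{align*}
with $\Zc=\int \uq(z)^{1/\gamma}p(z)^{1-1/\gamma}\dx{z}$. Only the bracket depends on $\uq$. It is invariant under rescaling $\uq\mapsto c\,\uq$, since the first term changes by $-\log c$ and $\gamma\log\Zc$ changes by $+\log c$. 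We may therefore take $\uq$ to be normalised.

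The main step is to minimise the bracket $B(\uq)\defeq \int r\log(r/\uq) + \gamma\log\Zc$ over $\uq$. The paper already states that the optimiser is $\uq^*\propto r^\gamma p^{1-\gamma}$. To prove this rather than assume it, set $s(z)\defeq \uq(z)^{1/\gamma}p(z)^{1-1/\gamma}/\Zc$, which is a normalised density. Solving for $\uq$ gives $\log\uq = \gamma\log s + (1-\gamma)\log p + \gamma\log\Zc$. Substituting this into $B$ yields
\begin{align*}
B = \int r\log r - \gamma\int r\log s - (1-\gamma)\int r\log p,
\end{align*}
and the $\log\Zc$ terms cancel exactly. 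Adding and subtracting $\gamma\int r\log r$ rewrites this as
\begin{align*}
B = \gamma\KL{r}{s} + (1-\gamma)\KL{r}{p} \geq (1-\gamma)\KL{r}{p}.
\end{align*}
Equality holds iff $s=r$, that is, iff $\uq\propto r^\gamma p^{1-\gamma}$. This agrees with the stated optimiser.

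Plugging this bound back in and using $1-\gamma>0$ gives
\begin{align*}
\LBb_\gamma \le \int r\log p(\Data|z)\dx{z} - \KL{r}{p},
\end{align*}
which is exactly $\LELBO$ with approximate posterior $r$. The inequality holds for every $\uq$, with equality at the interpolating $\uq^*$.

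The only delicate point is the change of variables from $\uq$ to $s$. It requires $\Zc<\infty$, and it requires supports to match: $\supp\uq\subseteq\supp p$ as noted after \cref{eqn:LBg}, and $\supp r\subseteq\supp\uq$ so that the KL term is finite. If either condition fails, $\LBb_\gamma=-\infty$ and the inequality holds trivially. I will dispose of those degenerate cases first.
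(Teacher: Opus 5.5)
Your proposal is correct and follows the paper's proof. Both fix $r$, maximise $\LBb_\gamma$ over $\uq$ to obtain $\uq^*\propto r^\gamma p^{1-\gamma}$, and note that substituting $\uq^*$ gives $\LELBO$ evaluated at $r$. The paper only asserts that $\uq^*$ is optimal, whereas your identity $\int r\log(r/\uq)+\gamma\log\Zc=\gamma\KL{r}{s}+(1-\gamma)\KL{r}{p}$ proves that it is a global maximiser, which is the step that makes the argument complete.
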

\begin{proof}
For a fixed $r(z)$, $\LBb_\gamma$ is optimal at $\uq^*(z)\propto r(z)^{\gamma} p(z)^{1-\gamma}$.
Substituting $\uq^*(z)$ into $\LBb_{\gamma}$ recovers ELBO.
Hence, $\LBb_{\gamma}$ is upper bounded by ELBO.
\end{proof}

\subsection{Variational Derivation of Saddle Point of $\LB_{\gamma}$}
\label{sec:saddlept}

To obtain the functional derivative $\partial\LB_{\gamma}/\partial \uq$, we introduce scalar $h$ and function $\eta(z)$ and consider 
\begin{align*}
\left.\frac{\dx{\LB_{\gamma}(\uq + h\eta)}}{\dx{h}}\right|_{h=0}
&=\begin{multlined}[t]
\left(
\frac{1}{1-\gamma}\frac{\int  \eta(z)\,p(\Data|z)^{1-\gamma} \dx{z}}{\int  (\uq(z) + h\eta(z))\,p(\Data|z)^{1-\gamma} \dx{z}}
\right.\left.
-  \frac{1}{1-\gamma}\frac{\int \eta(z)\,(\uq(z) + h\eta(z))^{1/\gamma-1} p(z)^{1-1/\gamma}\dx{z}}{\int (\uq(z) + h\eta(z))^{1/\gamma} p(z)^{1-1/\gamma}\dx{z}}
\right)_{h=0}
\end{multlined}\\
&=
\frac{1}{1-\gamma}\frac{\int  \eta(z)\,p(\Data|z)^{1-\gamma} \dx{z}}{\int  \uq(z) \,p(\Data|z)^{1-\gamma} \dx{z}}
-  \frac{1}{1-\gamma}\frac{\int \eta(z)\,\uq(z)^{1/\gamma-1} p(z)^{1-1/\gamma}\dx{z}}{\int \uq(z)^{1/\gamma} p(z)^{1-1/\gamma}\dx{z}}\\
&=
\int\frac{1}{1-\gamma}\left(\frac{p(\Data|z)^{1-\gamma} }{\int  \uq(z') \,p(\Data|z')^{1-\gamma} \dx{z'}}
-  \frac{\uq(z)^{1/\gamma-1} p(z)^{1-1/\gamma}}{\int \uq(z')^{1/\gamma} p(z')^{1-1/\gamma}\dx{z'}}\right)\,\eta(z)\dx{z},
\end{align*}
where the integrand sans $\eta(z)$ is the required derivative.
Equating to zero give
\begin{align*}
\uq(z) &= p(\Data|z)^{\gamma} p(z) / \uZ &
\uZ&= \left(\frac{\int  \uq(z) \,p(\Data|z)^{1-\gamma} \dx{z}}{\int \uq(z)^{1/\gamma} p(z)^{1-1/\gamma}\dx{z}}\right)^{\gamma/(1-\gamma)}
\end{align*}
Substituting $\uq(z)$ into the RHS of $\uZ$ gives
\begin{gather*}
\uZ = \left(\frac{\int  p(z) \,p(\Data|z) \dx{z}/\uZ}{\int p(\Data|z) p(z)\dx{z} / \uZ^{1/\gamma}}\right)^{\gamma/(1-\gamma)} = \uZ,
\end{gather*}
which is self-consistent.
The solution is true for any $\uZ$, so we \emph{may} choose $\uZ$ as the normalising constant.
Because of this, and because the optimal solution is already non-negative, it is not necessary to introduction Lagrange multipliers for constrained optimisation.

\subsection{Gap between Log-evidence and Variational R\'{e}nyi Bound}
\label{sec:VRB-gap}
\begin{align*}
\log p(\Data) -\frac{1}{1-\alpha} \log  \int q(z) \left( p(\Data, z) / q(z) \right)^{1-\alpha}\dx{z}
&=\frac{1}{1-\alpha} \log  \frac{p(\Data)^{1-\alpha}}{\int q(z) \left( p(\Data, z) / q(z) \right)^{1-\alpha}\dx{z}}\\
&=\frac{1}{\alpha-1} \log  \int \frac{q(z) \left( p(\Data, z) / q(z) \right)^{1-\alpha}}{p(\Data)^{1-\alpha}}\dx{z}\\
&=\frac{1}{\alpha-1} \log  \int q(z)^\alpha p(z|\Data) ^{1-\alpha}\dx{z},
\end{align*}

\subsection{Divergences}
\label{sec:divergence}

Following the definition of $\Levd$ and its lower bound $\LB_\gamma$,
we may define a divergence $\mathrm{D}^{\text{frac}}_\gamma$ from distribution $p_2$ to distribution $p_1$ 
with respect to an underlying distribution $p_0$:
\begin{gather*}
\mathrm{D}^{\text{frac}}_{\gamma}\left[p_2\|p_1\right] \defeq \Levd - \LB_{\gamma}.
\end{gather*}
Here, $p_0$ participates as the prior,
$p_1(z)\propto\ell(z)^{\gamma} p_0(z)$ as the target fractional posterior with likelihood $\ell(z)$,
and $p_2\equiv q$ as the approximating posterior.
By definition, the divergence is non-negative because $\LB_\gamma$ is a lower bound,
and the divergence is zero when $p_2=p_1$ because $\LB_\gamma$ is tight.

Let $Z$ be the normalising constant of $p_1$.
We have $\ell(z) = Z^{1/\gamma} (p_1(z)/p_0(z))^{1/\gamma}$.
By substitution and simplification,
\begin{align*}
\Levd &= 
\frac{1}{\gamma} \log Z 
+ \log \int p_1(z)^{1/\gamma} p_0(z)^{1-1/\gamma} \dx{z}\\
\LB_\gamma &=
\frac{1}{\gamma} \log Z 
+ \frac{1}{1-\gamma}\log \int p_2(z) p_1(z)^{1/\gamma -1 } p_0(z)^{1-1/\gamma} \dx{z}
- \frac{\gamma}{1-\gamma} \log \int p_2(z)^{1/\gamma} p_0(z)^{1-1/\gamma} \dx{z}.
\intertext{Therefore,}
\mathrm{D}^{\text{frac}}_{\gamma}\left[p_2\|p_1\right]
&=\begin{multlined}[t]
\log \int p_1(z)^{1/\gamma} p_0(z)^{1-1/\gamma} \dx{z} 
- \frac{1}{1-\gamma}\log \int p_2(z) p_1(z)^{1/\gamma -1 } p_0(z)^{1-1/\gamma} \dx{z}
\\+ \frac{\gamma}{1-\gamma} \log \int p_2(z)^{1/\gamma} p_0(z)^{1-1/\gamma} \dx{z},
\end{multlined}
\end{align*}
and $Z$ is not required.

We may also obtain a divergence without the notion of fractional posterior.
Continuing from above, let $\tilde{p}_i(z)\defeq p_i(z)^{1/\gamma} p_0(z)^{1-1/\gamma} / Z_i$,
for $i=1,2$ and
where $Z_i$s are normalising constants.
Then, by substituting into and simplifying expressions in the right side of the above equation, we have
\begin{gather*}
\frac{1}{\gamma-1}\log \int \tilde{p}_2(z)^{\gamma} \tilde{p}_1(z)^{1-\gamma}  \dx{z},
\end{gather*}
which is the R\'{e}nyi divergence $\D{\gamma}{\tilde{p}_2}{\tilde{p}_1}$.

Observe that $\tilde{p}_1(z)\propto \ell(z) p_0(z)$ is the Bayes posterior.
So, in minimising the R\'{e}nyi divergence, we obtain $\tilde{p}_2$ as an approximate Bayes posterior.
We recover an approximate fractional posterior $p_2$ by using the definition of $\tilde{p}_2$, where the prior $p_0$ is needed.
Therefore, if we are to change the subject of optimisation from the fractional posterior to the Bayes posterior, 
we recover the variational R\'{e}nyi bound \citep{li-nips-2016}.

Throughout this section, the precise definition of $\LB_\gamma$ has allowed normalising constants to be cancelled.
Also, we can derive $\LB_\gamma$ as a lower bound retrospectively by reading this section in reverse.

\subsection{Derivations for $\LBbh_\gamma$}
\label{sec:LBbhproof}

The data term is because $p(D|z)$ is not dependent on $u$.
The R\'{e}nyi divergence term follows from \cref{eqn:HboundZc}.
We address the KL divergence term below.

We use the convexity of $\KL{p}{q}$ in the pair $(p, q)$. This provides
$-\int r(z) \log (r(z)/q(z)) \dx{z} \geq -\int \left( \int r(z|u) \log (r(z|u)/q(z|u)) \dx{z})\right) q(u) \dx{u}$
for the KL divergence term in $\LBbh$.
The overall bound requires a double integral because we have a hierarchical construction, involving
random variables $u$ and $z$ given $u$.

An alternative derivation gives $\LBbh_\gamma$-alt in the last row in \cref{tbl:notation}.
The function $-x\log x$ is concave, so we have
$-\int r(z) \log r(z) \dx{z} \geq -\int \left( \int r(z|u) \log r(z|u) \dx{z})\right) q(u) \dx{u}$.
Similarly, $\log x$ is concave, so we also have 
$\int r(z) \log \uq(z) \dx{z} \geq \int r(z) \left( \int q(u') \log \uq(z|u') \dx{z})\right) \dx{u'}$.
Introducing $u'$ to the entropy term and $u$ to the negative cross-entropy term and then summing the two gives 
an alternative KL divergence term in $\LBbh$.
The triple integral in the KL term comes from the \emph{independently} mixing of $r(z|u)$ and $\uq(z|u)$ with the same distribution $q(u)$.

\subsection{Non-degeneracy of the Implicit Distributions within the Semi-implicit Distributions}

This section shows the existence of non-degenerate implicit distributions when optimising for $\LBh_\gamma$, $\LBbh_\gamma$ and $\LBh_\gamma$-alt.
A common theme is that the R\'{e}nyi divergence is expressed as a log-integral rather than an integral-log, so the optimsation for the implicit distribution cannot be factored out.

\subsubsection{For Fractional Posteriors using $\LBh_{\gamma}$}
\label{sec:LBhOK}

For this section, let
$f(u) \defeq \int q(z|u) p(\Data|z)^{1-\gamma} \dx{z}$ and
$g(u) \defeq \int (q(z|u)/p(z))^{1/\gamma-1} q(z|u) \dx{z}$, so that
$\LBh_\gamma(q)= (\log\int f(u) q(u) \dx{u})/(1-\gamma) - (\log\int g(u) q(u) \dx{u}) \gamma/(1-\gamma)$,
where $q$ is just for the distribution of $u$.
We introduce scalar $h$ and function $\eta(u)$.
The functional derivative $\partial\LBh_{\gamma}/\partial \log q$ is the integrand sans $\eta(u)$ of the expression
\begin{gather*}
\left.\fracd{\LBh_\gamma(\log q + h\eta)}{h}\right|_{h=0}
=\int \left(\frac{1}{1-\gamma}\frac{f(u)}{\int f(u') q(u') \dx{u'}} - \frac{\gamma}{1-\gamma}\frac{g(u)}{\int g(u') q(u') \dx{u'}} \right) q(u) \eta(u) \dx{u}
\end{gather*}
Together with the normalisation constraint which introduces a Lagrange multiplier $\lambda$, we require
\begin{gather*}
\left(\frac{1}{1-\gamma}\frac{f(u)}{\int f(u') q(u') \dx{u'}} - \frac{\gamma}{1-\gamma}\frac{g(u)}{\int g(u') q(u') \dx{u'}} + \lambda\right)q(u)  = 0. 
\end{gather*}
Integrating with respect to $u$ yield $1+\lambda=0$, so we have
\begin{gather}
\left(\frac{1}{1-\gamma}\frac{f(u)}{\int f(u') q(u') \dx{u'}} - \frac{\gamma}{1-\gamma}\frac{g(u)}{\int g(u') q(u') \dx{u'}} - 1\right)q(u)  = 0.
\label{eqn:LBhOK}
\end{gather}
This means that $q(u)$ will collapse to zero if the left term is not zero.
Though it is not necessary that $q(u)$ degenerates to a delta distribution, a delta distribution satisfies the above constraint readily.

For further illustration, consider $q(u)$ supported at only two locations $u_1$ and $u_2$.
Using $q_i$, $f_i$ and $g_i$ to denote evaluations of $f$ and $g$ at these locations, 
we have
\begin{align*}
q_1 &= \frac{(1-\gamma)f_{2} g_{2} - f_{1} g_{2} + \gamma f_{2} g_{1}  }{(1-\gamma) (f_{1} - f_{2}) (g_{1} - g_{2}) }
&
q_2 &= \frac{(1-\gamma) f_{1} g_{1}  - f_{2} g_{1} + \gamma f_{1} g_{2} }{(1-\gamma) (f_{1} - f_{2}) (g_{1} - g_{2}) },
\end{align*}
which is satisfiable with different values of $f_i$s and $g_i$s.

\subsubsection{For Bayes and Fractional Posteriors using $\LBbh_{\gamma}$}
\label{sec:LBbhOK}

For this section, let
\begin{align*}
f(u) &\defeq \int r(z|u) \log p(\Data|z) \dx{z} &
g(u) &\defeq \int (q(z|u)/p(z))^{1/\gamma-1} q(z|u) \dx{z} \\ 
h(u) &\defeq \int r(z|u) \log r(z|u) \dx{z} &
d(u) &\defeq \int r(z|u) \log q(z|u) \dx{z},
\end{align*}
so that
\begin{equation*}
\LBbh_\gamma(q) = \int f(u) q(u) \dx{u} - \frac{1}{1-\gamma} \int h(u) q(u) \dx{u} + \frac{1}{1-\gamma} \int d(u) q(u) \dx{u}  - \frac{\gamma}{1-\gamma} \log\int g(u) q(u) \dx{u},
\end{equation*}
where $q$ is just for the distribution of $u$.
Taking derivative with respect to $\log q(u)$ and imposing normalisation constraint with the Lagrange multiplier $\lambda$, we require
\begin{gather}
\left( f(u) - \frac{1}{1-\gamma} h(u)  
+\frac{1}{1-\gamma} d(u)
- \frac{\gamma}{1-\gamma}\frac{g(u)}{\int g(u') q(u') \dx{u'}} - \lambda\right) q(u)  = 0. 
\label{eqn:LBbh_constraint-new}
\end{gather}
Integrating with respect to $u$ fix
\begin{gather*}
\lambda =
\int f(u) q(u) \dx{u} - \frac{1}{1-\gamma} \int h(u) q(u) \dx{u} + \frac{1}{1-\gamma} \int d(u) q(u) \dx{u}  - \frac{\gamma}{1-\gamma}. 
\end{gather*}

A delta distribution satisfies \cref{eqn:LBbh_constraint-new} readily.
However, other solutions are also possible in general.
As an example, consider $q(u)$ supported at only two locations $u_1$ and $u_2$,
and let
$\Delta f\defeq f(u_1) - f(u_2)$,
$\Delta h\defeq h(u_1) - h(u_2)$,
$\Delta d\defeq d(u_1) - d(u_2)$, and
$\Delta g\defeq g(u_1) - g(u_2)$.
In these settings, and using $q(u_1) + q(u_2) = 1$, \cref{eqn:LBbh_constraint-new} may be written as
\begin{gather*}
\left(
\Delta f - \frac{1}{1-\gamma} \Delta h + \frac{1}{1-\gamma} \Delta d - \frac{\gamma}{1-\gamma} \frac{\Delta g}{g(u_1) q(u_1) + g(u_2) + q(u_2)}
\right) q(u_1) q(u_2) = 0
\end{gather*}
Since $q(u_1)\neq0$ and $q(u_2)\neq0$, the first term must be zero. 
This can be expressed as
\begin{gather*}
g(u_1) q(u_1) + g(u_2) q(u_2) = \frac{\gamma\Delta g}{(1-\gamma)\Delta f - \Delta h + \Delta d}.
\end{gather*}
Using $q(u_2) = 1-q(u_1) $, the explicit expression for $q(u_1)$ is
\begin{gather*}
q(u_1) = \frac{\gamma}{(1-\gamma)\Delta f - \Delta h + \Delta d} - g(u_2).
\end{gather*}

\subsubsection{For Bayes and Fractional Posteriors using $\LBbh_{\gamma}$-alt}
\label{sec:LBbhaltOK}

We define $f$, $g$ and $h$ as for $\LBbh_{\gamma}$ in \cref{sec:LBbhOK}, but we now have
$d(u, u') \defeq \int r(z|u) \log q(z|u') \dx{z}$,
so that
\begin{multline*}
\LBbh_\gamma\text{-alt}(q) = \int f(u) q(u) \dx{u} - \frac{1}{1-\gamma} \int h(u) q(u) \dx{u} + \frac{1}{1-\gamma} \iint d(u,u') q(u) q(u')\dx{u'}\dx{u}
 \\ - \frac{\gamma}{1-\gamma} \log\int g(u) q(u) \dx{u},
\end{multline*}
where $q$ is just for the distribution of $u$.
Taking derivative with respect to $\log q(u)$ and imposing normalisation constraint with the Lagrange multiplier $\lambda$, we require
\begin{gather}
\biggl( f(u) - \frac{1}{1-\gamma} h(u)  
+\frac{1}{1-\gamma}\int ( d(u,u') + d(u', u) ) q(u') \dx{u'}
- \frac{\gamma}{1-\gamma}\frac{g(u)}{\int g(u') q(u') \dx{u'}} - \lambda\biggr) q(u)  = 0. 
\label{eqn:LBbh_constraint}
\end{gather}
Integrating with respect to $u$ fix
\begin{gather*}
\lambda =
\int f(u) q(u) \dx{u} - \frac{1}{1-\gamma} \int h(u) q(u) \dx{u} + \frac{2}{1-\gamma} \iint d(u,u') q(u) q(u')\dx{u'}\dx{u}  - \frac{\gamma}{1-\gamma}. 
\end{gather*}

A delta distribution satisfies \cref{eqn:LBbh_constraint} readily.
However, other solutions are also possible in general.
As an example, consider $q(u)$ supported at only two locations $u_1$ and $u_2$,
and let
$\Delta f\defeq f(u_1) - f(u_2)$,
$\Delta h\defeq h(u_1) - h(u_2)$,
$\Delta d\defeq \int ( d(u_1, u') + d(u', u_1) ) q(u') \dx{u'} -  \int ( d(u_2, u') + d(u', u_2) ) q(u') \dx{u'}$, and
$\Delta g\defeq g(u_1) - g(u_2)$.
We proceed as the example for section~\ref{sec:LBbhOK} under these definitions.

\subsection{Gradients and Expectations for the Multinomial Example}
\label{sec:multinomial-extra}

The gradients with respect to the parameters are
\begin{align*}
\frac{\partial\log \uq(\vec{z})}{\partial \mu_c} &= \frac{z_c - \mu_c}{\sigma_c^2} & 
\frac{\partial\log \uq(\vec{z})}{\partial \sigma_c^2} &= -\frac{1}{2\sigma_c^2} + \frac{(z_c-\mu_c)^2}{2\sigma_c^4}
\end{align*}
and the required expectations are
\begin{align*}
\E[\qc]{z_c} &= m_{c} + \rho_c s_{c}^2 &
\E[\qc]{z_c^2} &= s_c^2 + m_{c}^2 + s_{c}^2(s_c^2 + 2m_c)\rho_c \\
\E[\qd]{z_c} &= \mu_c + \sigma_c^2\,n_c/(n+1) &
\E[\qd]{z_c^2} &= \sigma_c^2 + \left(\mu_c + \sigma_c^2\,n_c/(n+1)\right)^2,
\end{align*}
where
\begin{align*}
\rho_c &\defeq \frac{\exp(m_{c} + s_{c}^2/2)}{\sum_{c'=1}^C \exp(m_{c'} + s_{c'}^2/2)}.
\end{align*}
These can be used for gradient ascend to learn the parameters of $\uq$.

\subsection{Derivation for the Mixture Model}
\label{sec:gmm-deriv}

The marginal likelihood or evidence is
\begin{gather*}
\exp \Levd = \int p(\vec{u}) \sum_{\vec{c}} p(\vec{c}) \prod_{i=1}^n p(x_i | c_i, \vec{u})\dx{\vec{u}}.
\end{gather*}
Applying \cref{eqn:LBg} on $\Levd$ focusing on the posterior for $p(\vec{u})$ gives
\begin{gather}
\Levd \geq \frac{1}{1-\gamma} \log \int q(\vec{u}) \left[\sum_{\vec{c}} p(\vec{c}) \prod_{i=1}^n p(x_i | c_i, \vec{u})\right]^{1-\gamma}\dx{\vec{u}}
- \frac{\gamma}{1-\gamma}\log\int q(\vec{u})^{1/\gamma} p(\vec{u})^{1-1/\gamma} \dx{\vec{u}}.
\label{eqn:boundmixture}
\end{gather}
Applying ELBO on the logarithm of the term within the brackets above and then exponentiating the result get us to
\begin{gather*}
\Levd \geq \frac{1}{1-\gamma} \log \int q(\vec{u}) \left[\prod_{i=1}^n \prod_{c_i=1}^K \left(p(x_i | u_{c_i}) \frac{p(c_i)}{q(c_i)}\right)^{q(c_i)}\right]^{1-\gamma}\dx{\vec{u}}
- \frac{\gamma}{1-\gamma}\log\int q(\vec{u})^{1/\gamma} p(\vec{u})^{1-1/\gamma} \dx{\vec{u}}.
\end{gather*}
The argument of the logarithm in the first summand is the first displayed expression in \cref{sec:gmm}.
In the first summand, we bring the products out of the logarithm:
\begin{gather*}
\Levd \geq \frac{1}{1-\gamma} \sum_{i=1}^n \sum_{c_i=1}^K \log \int q(\vec{u}) \left(p(x_i | u_{c_i}) \frac{p(c_i)}{q(c_i)}\right)^{(1-\gamma)q(c_i)}\dx{\vec{u}}
- \frac{\gamma}{1-\gamma}\log\int q(\vec{u})^{1/\gamma} p(\vec{u})^{1-1/\gamma} \dx{\vec{u}}.
\end{gather*}
In the first summand, the density ratios $p(c_i)/q(c_i)$ are independent of $\vec{u}$ and taken out to give the KL divergence.
So the bound is written as
\begin{gather*}
\frac{1}{1-\gamma} \sum_{i=1}^n \sum_{c_i=1}^K \log \int q(\vec{u})\,p(x_i | u_{c_i})^{(1-\gamma)q(c_i)}\dx{\vec{u}}
- \sum_{i=1}^n \sum_{c_i=1}^K q(c_i)\log \frac{q(c_i)}{p(c_i)} 
- \frac{\gamma}{1-\gamma}\log\int q(\vec{u})^{1/\gamma} p(\vec{u})^{1-1/\gamma} \dx{\vec{u}}.
\end{gather*}
Finally, use the mean-field approximation for $q(\vec{u})$ and rewriting the indexing for the $c_i$s as indexing for $k$ to give
\begin{multline}
\frac{1}{1-\gamma} \sum_{i=1}^n \sum_{k=1}^K \log \int q(u_k)\,p(x_i | u_{k})^{(1-\gamma)q(c_i=k)}\dx{u_k}
- \sum_{i=1}^n \sum_{k=1}^K q(c_i=k)\log \frac{q(c_i=k)}{p(c_i=k)} 
\\- \frac{\gamma}{1-\gamma}\sum_{k=1}^K\log\int q(u_k)^{1/\gamma} p(u_k)^{1-1/\gamma} \dx{u_k}.
\label{eqn:boundmixture1}
\end{multline}
Swapping the order of the summations and using variational parameter $\varphi_{ik}$ for $q(c_i=k)$ gives the second displayed expression in \cref{sec:gmm}.

\subsubsection{Fractional Posteriors for Cluster Assignments}
\label{sec:fractional_cluster_assignments}

For approximate inference to be tractable for the mixture model, it seems that we ultimately cannot avoid using ELBO for $\vec{c}$.
Nonetheless, this does not preclude us from \emph{also} having a fractional posterior for $\vec{c}$.
Instead of applying ELBO on \cref{eqn:boundmixture}, we apply the $\LBb_{\gamma'}$ in \cref{sec:bpost} to the same term:
\begin{multline*}
\log\sum_{\vec{c}} p(\vec{c}) \prod_{i=1}^n p(x_i | c_i, \vec{u})
\geq
\sum_{\vec{c}} r(\vec{c}) \log \prod_{i=1}^n p(x_i | c_i, \vec{u})
- \frac{1}{1-\gamma'} \sum_{\vec{c}} r(\vec{c}) \log \frac{r(\vec{c})}{q(\vec{c})}
\\- \frac{\gamma'}{1-\gamma'} \log \sum_{\vec{c}} q(\vec{c})^{1/\gamma'} p(\vec{c})^{1-1/\gamma'},
\end{multline*}
where $r(\vec{c})$ is the approximate Bayes posterior and $q(\vec{c})$ is the approximate fractional posterior. 
Following through derivations similar to before with mean-field approximations, we obtain
\begin{multline*}
\Levd\geq 
\frac{1}{1-\gamma} \sum_{i=1}^n \sum_{k=1}^K \log \int q(u_k)\,p(x_i | u_{k})^{(1-\gamma)r(c_i=k)}\dx{u_k}
- \frac{1}{1-\gamma'}\sum_{i=1}^n \sum_{k=1}^K r(c_i=k)\log \frac{r(c_i=k)}{q(c_i=k)} 
\\
- \frac{\gamma'}{1-\gamma'} \sum_{i=1}^n \log \sum_{k=1}^K q(c_i=k)^{1/\gamma'} p(c_i=k)^{1-1/\gamma'}
- \frac{\gamma}{1-\gamma}\sum_{k=1}^K\log\int q(u_k)^{1/\gamma} p(u_k)^{1-1/\gamma} \dx{u_k}.
\end{multline*}

As reasoned in \cref{sec:bpost}, the optimal fractional posteriors $q(c_i)$s interpolate between the $r(c_i)$s and the $p(c_i)$s:
$q(c_i) \propto r(c_i)^{\gamma'} p(c_i)^{1-\gamma'}$.
At this setting, we recover \cref{eqn:boundmixture1} with a change of notation for the approximate Bayes posterior. 
This is expected since there is no constraint on fractional posteriors $q(c_i)$s other than normalisation.

\section{Monte Carlo Estimates}
\label{sec:mcmcest-extra}

Suppose we have $\Ns$ samples $z_i$s from distribution $q(z) =\uq(z)/Z$ for known normalising constant $Z$.
Then 
\begin{gather*}
\LB_\gamma
\approx \frac{1}{1-\gamma}\log \frac{1}{\Ns}\sum_i p(\Data|z_i)^{1-\gamma}
-  \frac{\gamma}{1-\gamma}\log \frac{1}{\Ns}\sum_i  (q(z_i) /p(z_i))^{1/\gamma-1}.
\end{gather*}
Ideally, one should draw separate samples for estimating $\Zc$ and $\Zd$,
but in practice, one trades this off with computational efficiency.
We currently employ this approach.

If we only have the unnormalised density, then approximating the lower bound requires  the normalising factor $Z$:
\begin{gather*}
\LB_\gamma
\approx\log Z  + \frac{1}{1-\gamma}\log \frac{1}{\Ns}\sum_i p(\Data|z_i)^{1-\gamma}
-  \frac{\gamma}{1-\gamma}\log \frac{1}{\Ns}\sum_i  \left(\frac{\uq(z_i)}{p(z_i)}\right)^{1/\gamma-1}.
\end{gather*}
The normalising constant cannot be avoided, and
one may estimate it with, for example, importance sampling from a distribution for which the normalising constant is known \citep{gelman-statsci-1998}.

An alternative is to introduce a mixing distribution and use the $\LBh_\gamma$ bound:
\begin{gather*}
\LBh_\gamma
\approx\frac{1}{1-\gamma}\log \frac{1}{\Ns}\sum_i \sum_j p(\Data|z_{ij})^{1-\gamma}
-  \frac{\gamma}{1-\gamma}\log \frac{1}{\Ns}\sum_i  \sum_j \left(\frac{q(z_{ij}|u_i)}{p(z_{ij})}\right)^{1/\gamma-1},
\end{gather*}
where there are now $\Ns'$ samples from $u_i\sim q(u)$ followed by $\Ns/\Ns'$ samples $z_{ij}\sim q( z | u_i)$.
In practice, there can be different number of $z$ samples for each $u_i$, but we simplify the notation here.
In this setting, $\uq(z)$ need not be known explicitly, but we are required to be able to draw the $(u_i, z_i)$s samples  and to know the conditional $q(z|u)$ exactly.

For $\LBbh_\gamma$, 
we similarly have $\Ns'$ samples from $u_i\sim q(u)$ and $\Ns/\Ns'$ samples $z_{ij}\sim q( z | u_i)$,
but we now also have $\Ns/\Ns'$ samples $z'_{ik}\sim r( z | u_i)$:
\begin{gather*}
\LBbh_\gamma
\approx
\frac{1}{\Ns}  \sum_i \sum_k \log p(\Data|z'_{ik})
- \frac{1}{1-\gamma}\frac{1}{\Ns} \sum_i \sum_k \log \frac{r(z'_{ik} |u_i)}{q(z'_{ik} |u_i)}
-  \frac{\gamma}{1-\gamma}\log \frac{1}{\Ns}\sum_i \sum_j \left(\frac{q(z_{ij}|u_i)}{p(z_{ij})}\right)^{1/\gamma-1}.
\end{gather*}

For the alternative $\LBbh_\gamma$-alt bound (last row in \cref{tbl:notation}), we suggest the following mechanism.
We have the same samples, but further, for each $u_i$ sample, we associate with it a subset $U_i$ from the set $\setof{u_1,\ldots,u_{\Ns'}}$.
Then we estimate this alternate bound with
\begin{multline*}
\frac{1}{\Ns}  \sum_i \sum_k \log p(\Data|z'_{ik})
- \frac{1}{1-\gamma}\frac{1}{\Ns} \sum_i \sum_k \log r(z'_{ik} |u_i)
\\+ \frac{1}{1-\gamma}\frac{1}{\Ns} \sum_i \sum_k \frac{1}{|U_i|}\sum_{u_j\in U_i} \log q(z'_{ik} |u_j)
-  \frac{\gamma}{1-\gamma}\log \frac{1}{\Ns}\sum_i \sum_j \left(\frac{q(z_{ij}|u_i)}{p(z_{ij})}\right)^{1/\gamma-1}.
\end{multline*}
If $q(u)$ is a continuous distribution such that there is zero probability of having two samples with the same value, then it is important that $U_i$ excludes $u_i$.
Similarly, if $|U_i|$ is small, then a systematic instead of random selection of the subsamples should be better.
The method of unbiased implicit variational inference \citep{titsias-aistats-2019uivi} similarly requires a nested summation of independent samples from $q(u)$,
but the proposal there is to sample $U_i$ separately.
The trade-off between computational cost and more faithful Monte Carlo estimates depends on, for example, the cost of sampling from $q(u)$.
Yet another approach is the Gaussian approximation based on linearisation \citep{uppal-nips-2023livi}. 

\subsection{On Single Samples}
\label{sec:mcmcest-1}

If $\Ns=1$, as commonly done for local latent variable models (see section~\ref{sec:vae}), 
then the above estimates for $\LB_\gamma$ and $\LBh_\gamma$ revert to ELBO because the exponents within the logarithms cancel the multiplicative factors on the logarithms.
Therefore, these bounds require $\Ns>1$ to be effective.

For $\LBbh_\gamma$, the $1/(1-\gamma)$ factor remains for the KL divergence from $r$ to $q$. The implications may be subject to future work.

\subsection{Considerations for Learning}
\label{sec:mcmclearning}

If we are using the above estimates within an automatic differentiation procedure for learning the parameters of the distributions,
it is critical that any sampling distributions (that is, $q(z)$, or $q(z|u)$ and $q(u)$) be driven from standard distributions with fixed parameters so that we may apply
the \emph{Law of the Unconscious Statistician},
also known as the reparameterisation trick \citep{kingma-iclr-2014vae}.

In addition, for the semi-implicit constructions, we find learning to be effective when $\Ns/\Ns'>1$, that is more than one sample of $z$ for each sample of $u$.

For ELBO with semi-implicit posteriors, although a delta distribution is known to be optimal \citep{yin-icml-2018sivi}, variances in sampling may lead to a mixture of delta distributions located at the optimal and the near-optimal locations.
This is further exacerbated by the variance in stochastic gradient methods.
The parameterisation of the implicit posteriors may also give a region of lower probabilities ``bridging'' these locations.
\cref{fig:vae-mnist-si-5x5} in \cref{sec:vae-extra} provides an illustration based on VAE on the MNIST data set.

\section{Experiments}
\label{app:experiments}
This section collects additional details and results for the experiments.

\subsection{Calibration}
\label{sec:calibration-extra}

For the calibration experiment (section~\ref{sec:calibration}),
we initialise the estimated posterior with means $-1$ and $1$,
and variances $n/K$.
The prior standard deviations of the component means are $3$.
The experiment is not sensitive to the precise settings of these quantities.

\subsubsection{Strategy using Inverse Squared Length}
\label{sec:calibration-invsqell}

For a prior $p(z)$ and a corresponding exact Bayes posterior $p(z|D)$,
the exact fractional posterior is given by the interpolation $p(z)^{1-\gamma}p(z|D)^{\gamma}$.
For Gaussian distributions, the precision of this fractional posterior
is $(1-\gamma)/\sigma^2_{0} + \gamma/\sigma^2_1$, 
where $\sigma^2_0$ and $\sigma^2_{1}$ are the variances of the prior and the Bayes posterior.
In the context of calibration, precision is inversely proportional to the square of the interval length.

This suggests us to perform linear regression against $\ell^{-2}$ to target the required interval length,
using the results from the approximate posteriors.
For the experimental setup in section~\ref{sec:calibration}, this strategy, called $R_{\ell^{-2}}$,
gives $\kappa$s (resp. $\ell$s) with $0.9320$ (resp. $0.2735$) and $0.9220$ (resp. $0.2888$) for $\mu_1$ and $\mu_2$.
While $R_{\ell^{-2}}$ does gives interval lengths closest to the ideal of $0.2772$,
the coverages are lower.
The $\gamma$ for this is $0.974$ with evidence bound $-834.9$.

That strategy $R_{\ell^{-2}}$ is not necessary best in all respects reminds us that we are dealing with a mixture model and not a Gaussian model.
The difference is empirically elaborated in \cref{sec:gmmdiff}.

\subsubsection{Different Experimental Settings}
\label{sec:calibration-diff-settings}

We find the results and conclusions to be similar when the number of observations $n$ per data set is different, albeit with different interval lengths.
Table~\ref{tbl:calibration-small-n} gives the results for the same experiment but with $n=30$ (versus the 400 in section~\ref{sec:calibration}).
This is similar for closer component means at $-1/2$ and $1/2$, though now with different coverages (Table~\ref{tbl:calibration-closer-mu}),
and only the more comprehensive $R_{\kappa}$ that gives $\gamma=0.38$ is effective for calibration in this more difficult setting.

The picture is more complex with $K=4$ components, centred at $-2$, $-1/2$, $1/2$ and $2$:\footnote{The posterior means are initialised at $\pm1$ and $\pm1/4$.} 
the coverages for components at $\pm1/2$ are noticeably smaller than for components at $\pm2$, more so for larger $\gamma$s
(Table~\ref{tbl:calibration-K=4}).
Again, only $R_{\kappa}$ that gives $\gamma=0.26$ is close to effective for calibration.

This extended study highlights the need to consider fractional posteriors, especially for difficult problems.

\begin{table}
\centering
\caption{Calibration study of the Gaussian mixture model at $\alpha=0.05$ significance level, for different settings.}
\begin{subfigure}[t]{0.49\textwidth}\centering
\caption{For $n=30$.
The $\gamma$ values for $R_\ell$, $R_{\ell^{-2}}$ and $R_\kappa$ are $0.789$, $0.965$ and $0.849$.}
\label{tbl:calibration-small-n}
\begin{tabular}{lccccc}\toprule
&\multicolumn{2}{c}{$\mu_1$} & \multicolumn{2}{c}{$\mu_2$} \\\cmidrule(lr){2-3}\cmidrule(lr){4-5}
 & $\kappa$ & $\ell$ & $\kappa$ & $\ell$ & bound \\\midrule
$\LB_{0.1}$ & $1.0000$ & $3.0111$ & $1.0000$ & $3.1774$ & $-69.2$ \\
$\LB_{0.2}$ & $0.9998$ & $2.1644$ & $0.9998$ & $2.2901$ & $-69.1$ \\
$\LB_{0.3}$ & $0.9990$ & $1.7770$ & $0.9978$ & $1.8822$ & $-69.4$ \\
$\LB_{0.4}$ & $0.9968$ & $1.5433$ & $0.9936$ & $1.6355$ & $-69.7$ \\
$\LB_{0.5}$ & $0.9922$ & $1.3826$ & $0.9876$ & $1.4658$ & $-69.9$ \\
$\LB_{0.6}$ & $0.9844$ & $1.2636$ & $0.9780$ & $1.3399$ & $-70.2$ \\
$\LB_{0.7}$ & $0.9746$ & $1.1708$ & $0.9672$ & $1.2418$ & $-70.4$ \\
$\LB_{0.8}$ & $0.9620$ & $1.0958$ & $0.9524$ & $1.1624$ & $-70.5$ \\
$\LB_{0.9}$ & $0.9498$ & $1.0336$ & $0.9382$ & $1.0966$ & $-70.7$ \\
$\LB_{1.0}$ & $0.9328$ & $0.9810$ & $0.9234$ & $1.0408$ & $-70.8$ \\[1ex]
$C_{0.1}$ & $1.0000$ & $3.0111$ & $1.0000$ & $3.1774$ & $-74.7$ \\
$C_{0.5}$ & $0.9920$ & $1.3826$ & $0.9874$ & $1.4658$ & $-70.4$ \\
$C_{0.9}$ & $0.9494$ & $1.0336$ & $0.9378$ & $1.0966$ & $-70.7$ \\[1ex]
$R_\ell$ & $0.9628$ & $1.1035$ & $0.9538$ & $1.1706$ & $-70.5$ \\
$R_{\ell^{-2}}$ & $0.9398$ & $0.9983$ & $0.9296$ & $1.0595$ & $-70.8$ \\
$R_\kappa$ & $0.9570$ & $1.0642$ & $0.9464$ & $1.1290$ & $-70.6$ \\
\bottomrule
\end{tabular}
\end{subfigure}
\hfill
\begin{subfigure}[t]{0.49\textwidth}\centering
\caption{For $\mu_1=-1/2$, $\mu_2=1/2$.
The $\gamma$ values for $R_\ell$, $R_{\ell^{-2}}$ and $R_\kappa$ are $0.785$, $0.998$ and $0.379$.}
\label{tbl:calibration-closer-mu}
\begin{tabular}{lccccc}\toprule
&\multicolumn{2}{c}{$\mu_1$} & \multicolumn{2}{c}{$\mu_2$} \\\cmidrule(lr){2-3}\cmidrule(lr){4-5}
 & $\kappa$ & $\ell$ & $\kappa$ & $\ell$ & bound \\\midrule
$\LB_{0.1}$ & $0.9996$ & $0.8740$ & $0.9986$ & $0.8743$ & $-625.8$ \\
$\LB_{0.2}$ & $0.9946$ & $0.6188$ & $0.9898$ & $0.6190$ & $-624.9$ \\
$\LB_{0.3}$ & $0.9828$ & $0.5055$ & $0.9682$ & $0.5057$ & $-624.9$ \\
$\LB_{0.4}$ & $0.9642$ & $0.4379$ & $0.9434$ & $0.4380$ & $-625.0$ \\
$\LB_{0.5}$ & $0.9382$ & $0.3917$ & $0.9176$ & $0.3918$ & $-625.2$ \\
$\LB_{0.6}$ & $0.9082$ & $0.3576$ & $0.8884$ & $0.3577$ & $-625.3$ \\
$\LB_{0.7}$ & $0.8816$ & $0.3311$ & $0.8598$ & $0.3312$ & $-625.4$ \\
$\LB_{0.8}$ & $0.8558$ & $0.3097$ & $0.8378$ & $0.3098$ & $-625.6$ \\
$\LB_{0.9}$ & $0.8320$ & $0.2920$ & $0.8212$ & $0.2921$ & $-625.7$ \\
$\LB_{1.0}$ & $0.8126$ & $0.2771$ & $0.7972$ & $0.2771$ & $-625.8$ \\[1ex]
$C_{0.1}$ & $0.9990$ & $0.8740$ & $0.9978$ & $0.8743$ & $-630.2$ \\
$C_{0.5}$ & $0.9372$ & $0.3917$ & $0.9176$ & $0.3918$ & $-625.4$ \\
$C_{0.9}$ & $0.8314$ & $0.2920$ & $0.8212$ & $0.2921$ & $-625.7$ \\[1ex]
$R_\ell$ & $0.8596$ & $0.3127$ & $0.8414$ & $0.3128$ & $-625.6$ \\
$R_{\ell^{-2}}$ & $0.8128$ & $0.2773$ & $0.7974$ & $0.2774$ & $-625.8$ \\
$R_\kappa$ & $0.9686$ & $0.4501$ & $0.9486$ & $0.4503$ & $-625.0$ \\
\bottomrule
\end{tabular}
\end{subfigure}
\\[10ex]
\begin{subfigure}[t]{\textwidth}\centering
\caption{For $K=4$, with component means $-2$, $-1/2$, $1/2$, $2$.
The $\gamma$ values for $R_\ell$, $R_{\ell^{-2}}$ and $R_\kappa$ are $0.785$, $0.995$ and $0.263$.}
\label{tbl:calibration-K=4}
\begin{tabular}{lccccccccc}\toprule
&\multicolumn{2}{c}{$\mu_1$} & \multicolumn{2}{c}{$\mu_2$} 
&\multicolumn{2}{c}{$\mu_3$} & \multicolumn{2}{c}{$\mu_4$} \\\cmidrule(lr){2-3}\cmidrule(lr){4-5}\cmidrule(lr){6-7}\cmidrule(lr){8-9}
 & $\kappa$ & $\ell$ & $\kappa$ & $\ell$  & $\kappa$ & $\ell$ & $\kappa$ & $\ell$ & bound \\\midrule
$\LB_{0.1}$ & $0.9990$ & $1.2259$ & $0.9876$ & $1.2369$ & $0.9696$ & $1.2377$ & $0.9994$ & $1.2309$ & $-820.7$ \\
$\LB_{0.2}$ & $0.9942$ & $0.8712$ & $0.9046$ & $0.8757$ & $0.8904$ & $0.8757$ & $0.9956$ & $0.8740$ & $-817.3$ \\
$\LB_{0.3}$ & $0.9862$ & $0.7126$ & $0.7980$ & $0.7152$ & $0.8310$ & $0.7151$ & $0.9814$ & $0.7147$ & $-816.7$ \\
$\LB_{0.4}$ & $0.9710$ & $0.6176$ & $0.7188$ & $0.6195$ & $0.7864$ & $0.6194$ & $0.9660$ & $0.6194$ & $-816.7$ \\
$\LB_{0.5}$ & $0.9514$ & $0.5527$ & $0.6558$ & $0.5542$ & $0.7410$ & $0.5540$ & $0.9434$ & $0.5542$ & $-816.8$ \\
$\LB_{0.6}$ & $0.9328$ & $0.5047$ & $0.6076$ & $0.5059$ & $0.7018$ & $0.5057$ & $0.9234$ & $0.5061$ & $-817.0$ \\
$\LB_{0.7}$ & $0.9120$ & $0.4674$ & $0.5746$ & $0.4684$ & $0.6656$ & $0.4682$ & $0.9020$ & $0.4687$ & $-817.2$ \\
$\LB_{0.8}$ & $0.8878$ & $0.4373$ & $0.5436$ & $0.4382$ & $0.6384$ & $0.4380$ & $0.8846$ & $0.4385$ & $-817.4$ \\
$\LB_{0.9}$ & $0.8654$ & $0.4123$ & $0.5126$ & $0.4131$ & $0.6126$ & $0.4130$ & $0.8660$ & $0.4134$ & $-817.6$ \\
$\LB_{1.0}$ & $0.8422$ & $0.3912$ & $0.4864$ & $0.3919$ & $0.5970$ & $0.3918$ & $0.8484$ & $0.3923$ & $-817.8$ \\[1ex]
$C_{0.1}$ & $0.9988$ & $1.2259$ & $0.9830$ & $1.2369$ & $0.9684$ & $1.2377$ & $0.9996$ & $1.2309$ & $-826.4$ \\
$C_{0.5}$ & $0.9504$ & $0.5527$ & $0.6530$ & $0.5542$ & $0.7412$ & $0.5540$ & $0.9460$ & $0.5542$ & $-817.0$ \\
$C_{0.9}$ & $0.8650$ & $0.4123$ & $0.5106$ & $0.4131$ & $0.6174$ & $0.4130$ & $0.8664$ & $0.4134$ & $-817.6$ \\[1ex]
$R_\ell$ & $0.8922$ & $0.4414$ & $0.5478$ & $0.4423$ & $0.6412$ & $0.4421$ & $0.8860$ & $0.4426$ & $-817.4$ \\
$R_{\ell^{-2}}$ & $0.8432$ & $0.3922$ & $0.4884$ & $0.3929$ & $0.5926$ & $0.3927$ & $0.8500$ & $0.3932$ & $-817.8$ \\
$R_\kappa$ & $0.9904$ & $0.7610$ & $0.8404$ & $0.7642$ & $0.8506$ & $0.7641$ & $0.9870$ & $0.7633$ & $-816.8$ \\
\bottomrule
\end{tabular}
\end{subfigure}
\end{table}

\subsection{Differences with Fractional Posteriors obtained by Interpolation}
\label{sec:gmmdiff}

We seek to illustrate that a fractional posterior obtained by our bounds 
is in generally different from that obtained by interpolating between the prior and the Bayes posterior
(see \cref{sec:calibration-invsqell}).
We follow \cref{sec:calibration,sec:calibration-extra},
but with $K=2$, $n=20$ and component centres $-1/2$ and $1/2$ in order to make the differences more noticeable.

With compute the approximate Bayes posterior $r$ learnt by optimising ELBO ($\LB_{1.0}$),
we interpolate with prior to obtain a fractional posteriors $p^{1-\gamma}r^{\gamma}$.
This is compared with the approximate fractional posterior $q_\gamma$ obtained from optimising $\LB_{\gamma}$.
It is sufficient to examine the precisions for the first component to show the differences.

We compute mean-squared differences between the precisions of the two set of fractional posteriors at $\gamma\in\setof{0.1,0.2,0.3,0.4,0.5,0.6,0.7,0.8,0.9}$.
\Cref{fig:gmmdiff-hist} plots the histogram, across the 5000 data sets, of the base-10 logarithm of the mean-squared differences;
and \cref{fig:gmmdiff-bad} plots the precisions for a particular data set.
The plots demonstrate that the interpolated posteriors and the learnt posteriors are different, in general.

\begin{figure}
\begin{subfigure}[b]{0.49\textwidth}\centering
\tikzset{external/export next=false}
\begin{tikzpicture}
\begin{axis}[ybar, axis y line = none, axis x line=bottom, ymin=0, xlabel={$\log_{10}$ mean-squared differences in precisions}]
\addplot[hist={bins=100, data max=0, data min = -12}] table[x index=0, y index=0] {results/logresidue-0.txt};
\end{axis}
\end{tikzpicture}
\caption{The histogram of the logarithm of the mean-squared differences.}
\label{fig:gmmdiff-hist}
\end{subfigure}
\hfill
\begin{subfigure}[b]{0.49\textwidth}\centering
\tikzset{external/export next=false}
\begin{tikzpicture}
\begin{axis}[axis y line = left, axis x line=bottom, 
xmin = 0, ylabel={Precision}, xlabel={$\gamma$}, 
every axis x label/.style={at={(current axis.right of origin)},anchor=west},
width=0.9\textwidth]
\addplot table[x index=0, y index=1] {results/badresidue-0.txt};
\addplot table[x index=0, y index=2] {results/badresidue-0.txt};
\end{axis}
\end{tikzpicture}
\caption{An example of precisions for one data set.
The straight line (with squares) is the interpolated precision,
while the curved line (with circles) is obtained from the posterior learnt with $\LB_\gamma$.}
\label{fig:gmmdiff-bad}
\end{subfigure}
\caption{Differences between the precisions of the fractional posteriors for the first component of the mixture model.}
\end{figure}
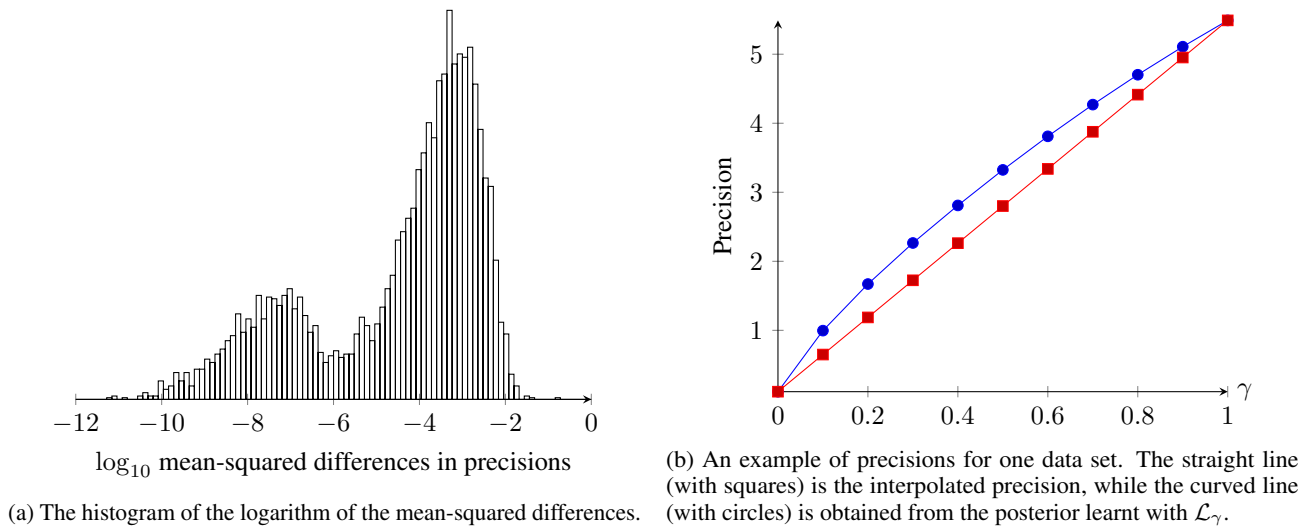

\subsection{The Tightness of Bounds for the Gaussian Mixture Models}
\label{sec:gmm-is}

We use importance sampling to estimate the evidence of the Gaussian mixture models (GMMs) using approximate posteriors.
For the log-evidence of $-827.24$ reported in \cref{sec:calibration}, 
we draw $\Ns=1,000$ samples from the approximate posteriors $q(\vec{u}, \vec{c})$ to estimate 
the log-evidence $\widehat{\Levd} \defeq \log \sum_{i=1}^{\Ns} w_i / \Ns$, 
where weights $w_i\defeq p(\vec{u}_i, \vec{c}_i) p(\vec{x}|\vec{u_i},\vec{c}_i) / q(\vec{u}_i, \vec{c}_i)$.
The average log-evidence of $-827.24$ is obtained over the $5,000$ replications.
The same value, up to five significant figures, is obtain with every one of the ten fractional or Bayes posteriors.

We perform more experiments with four settings: the one in \cref{sec:calibration} and the three in \cref{sec:calibration-diff-settings}.
We use one of the 5,000 replications for this.
For each setting, we draw $\Ns = 100,000$ importance samples from the approximate posteriors $q(\vec{u}, \vec{c})$ to estimate 
$\widehat{\Levd}$.
We also compute the coefficient of variation (CV) of the weights to give an indication of the quality of the estimates.
By central limit theorem, this is $\sqrt{\Ns}$ times the the coefficient of variation of the evidence $\exp \widehat{\Levd}$.

The results demonstrate that there is no guarantee that lower $\gamma$ will give better bounds (columns $\LB_\gamma$ in \cref{tbl:gmm-is}),
as we have reasoned in \cref{sec:choosegamma},
though the best bounds are typically not with ELBO ($\gamma=1.0$).

There is also a substantial gap between $\widehat{\Levd}$ and the best bounds (comparing columns $\LB_\gamma$ and $\widehat{\Levd}$ in \cref{tbl:gmm-is}).
We opine that this is because 
(1) the approximate posterior $q(\vec{u}, \vec{c})$ is factorised into $q(\vec{u}) q(\vec{c})$ where only $q(\vec{u})$ is the approximate fractional posterior
while $q(\vec{c})$ is still the approximate Bayes posterior;
and 
(2) in our GMM settings $\vec{c}$ has a larger role because there are $n\geq30$ data points while for $\vec{u}$ we have at most 4 one-dimensional components.

For the quality of importance sampling, the CVs are significantly smaller when the true means are more separated (first two columns of CV versus the last two columns of CV in \cref{tbl:gmm-is}).
This is expected because better separation suggests multi-modality in the true Bayes posterior is less pronounced.
In addition, for the purpose of better importance sampling in GMMs to obtain $\widehat{\Levd}$, 
using $\LB_\gamma$ has only a slight advantage --- we attribute this again to $q(\vec{c})$ being an approximate Bayes posterior and that $\vec{c}$ is a larger role.

\begin{table}
\centering
\caption{Importance sampling to estimate the log-evidence $\widehat{\Levd}$ of Gaussian mixture models using approximate posteriors. 
We use the four experimental settings from \cref{sec:calibration,sec:calibration-diff-settings}.
For each setting, and for ten values of $\gamma$ (including $\gamma=1.0$ for ELBO), 
we have the bound $\LB_\gamma$ from the variational optimisation, 
$\widehat{\Levd}$, and the coefficient of variation CV of the weights used to compute $\widehat{\Levd}$.
}
\label{tbl:gmm-is}
\begin{tabular}{*{13}{M}}\toprule
&
\multicolumn{3}{c}{$n=400$, $\mu_i=\pm2$} &
\multicolumn{3}{c}{$n=30$, $\mu_i=\pm2$} &
\multicolumn{3}{c}{$n=400$, $\mu_i=\pm1/2$} &
\multicolumn{3}{c}{$n=400$, 4 components} \\\cmidrule(lr){2-4}\cmidrule(lr){5-7}\cmidrule(lr){8-10}\cmidrule(lr){11-13}
\gamma & \LB_\gamma & \widehat{\Levd} & \text{CV}  & \LB_\gamma & \widehat{\Levd} & \text{CV}  & \LB_\gamma & \widehat{\Levd} & \text{CV} & \LB_\gamma & \Levd & \text{CV} \\\midrule
0.1 & -813.0 & -806.9 & 2.2 & -65.7 & -60.2 & 2.0 & -602.1 & -593.1 &  84.0 & -805.2 & -786.3 & 87.6\\
0.2 & -812.9 & -806.9 & 1.5 & -64.6 & -60.2 & 1.3 & -601.0 & -593.4 &  23.7 & -801.8 & -786.3 & 83.0\\
0.3 & -813.2 & -806.9 & 1.1 & -64.5 & -60.2 & 1.0 & -600.9 & -593.4 &  19.8 & -801.1 & -786.2 & 62.8\\
0.4 & -813.5 & -806.9 & 1.0 & -64.5 & -60.2 & 0.8 & -601.0 & -593.4 &  31.5 & -801.1 & -786.4 & 36.8\\
0.5 & -813.7 & -806.9 & 0.9 & -64.6 & -60.2 & 0.6 & -601.2 & -593.4 &  24.3 & -801.2 & -786.6 & 19.4\\
0.6 & -813.9 & -806.9 & 0.9 & -64.7 & -60.2 & 0.5 & -601.3 & -592.8 & 163.0 & -801.4 & -786.2 & 83.9\\
0.7 & -814.1 & -806.9 & 1.0 & -64.8 & -60.2 & 0.4 & -601.4 & -592.8 & 130.9 & -801.6 & -786.6 & 26.6\\
0.8 & -814.3 & -806.9 & 0.8 & -64.9 & -60.2 & 0.3 & -601.6 & -593.5 &  12.0 & -801.8 & -786.8 & 16.9\\
0.9 & -814.4 & -806.9 & 1.1 & -65.0 & -60.2 & 0.3 & -601.7 & -593.2 &  96.9 & -802.0 & -786.8 & 17.0\\
1.0 & -814.5 & -806.9 & 1.2 & -65.1 & -60.2 & 0.3 & -601.8 & -593.6 &  15.5 & -802.2 & -786.8 & 23.0\\
\bottomrule
\end{tabular}
\end{table}

If the end goal is to perform better importance sampling using fractional posteriors,
we may use the fractional posteriors for the cluster assignments, which are 
interpolations between the approximate Bayes posteriors and the priors (\cref{sec:fractional_cluster_assignments}).

\subsection{Variational Autoencoder}
\label{sec:vae-extra}

We make three experimental changes from \citet{ruthotto-gamm-2021}.
One, we use the  \emph{continuous Bernoulli distribution} \citep{loaiza-ganem-nips-2019} as the likelihood function instead of the cross-entropy loss function.
Two, we draw 100 samples from the approximate posterior per datum during training instead of the one sample that they use, 
because otherwise there is no difference between the ELBO and some of our bounds (see section~\ref{sec:mcmcest-1}).
Three, we train for 500 epochs instead of the 50 epochs that they have used, so that we obtain results closer to convergence to reduce doubts on the comparisons.

For the semi-implicit posterior family, we use three layers neural network for the implicit distribution,
similar to that used by \citet{yin-icml-2018sivi} with the following changes:
we reduce the noise dimensions to 15, 10 and 5, and the hidden dimensions to 28, 14 and 2 so that we can visualise the samples from the implicit distribution;
we use normal with mean 0.5 and standard deviation 1 instead of Bernoulli for the noise distribution to better match the gray-scale images that we use;
we use leaky ReLU activations \citep{maas-wdlsal-2013} for the hidden units to reduce degeneracy due to the learning dynamics,
and we use sigmoid for the output unit so that we can visualise the distribution within a unit square.

For training the explicit posteriors with $\LB_{\gamma}$, we use 100 samples per datum.
For the semi-implicit posteriors with $\LBh_{\gamma}$, we draw 10 samples from the implicit distribution per datum, then 10 latent variables from the explicit distribution per implicit sample.
For $\LBbh_{\gamma}$ involving semi-implicit fractional and Bayes posteriors, we additionally use 5 of the 10 implicit samples to estimate the cross entropy term.
Batch size of 64 is used for training with the Adam optimizer, where the learning rate and weight decay are set to $10^{-3}$ and $10^{-5}$.

We use ten experimental runs to obtain \cref{tbl:vae-mnist-evidences}.
The neural networks in each run are iniitlised with a different seed for the pseudo-random number generator.
The results in the table are the mean and three standard deviations of these ten runs.
The overall relatively small variations among the ten runs suggests the stability of the results.

We also perform the same experiments for the $\LBbh_\gamma$-alt bound (last row of \cref{tbl:notation}),
and the results are compared with those of $\LBbh_\gamma$ in \cref{tbl:vae-mnist-evidences-alt}).
We find their results to be very similar.

\begin{table*}\centering
\caption{Average log-evidences (higher better) over data samples, and its breakdown for VAE on MNIST data sets, for $\LBbh_\gamma$ and $\LBbh_\gamma$-alt.
We give the mean and \emph{three} standard deviations of these averages over ten experimental runs.
For Monte Carlo averages, 1,024 samples are used ($32\times32$ for semi-implicit posteriors). 
We show the addition of the KL divergence and R\'{e}nyi's divergence for \emph{Test using Objective}; and for \emph{Test using ELBO} we evaluate the Bayes posterior $r$.
}
\label{tbl:vae-mnist-evidences-alt}
\begin{tabular}{lc*{7}{M}}\toprule
&&&\multicolumn{3}{c}{Test using Objective} &\multicolumn{3}{c}{Test using ELBO} \\\cmidrule(lr){4-6}\cmidrule(lr){7-9}
Objective & $\gamma$  &\text{Train (Total)} & \text{Total} & \text{data} & \text{div} & \text{Total} & \text{data} & \text{div} \\\midrule
$\LBbh_\gamma$
&0.9  & 1636.2\tinysd{11.5} & 1608.8\tinysd{23.3} & 1613.4\tinysd{23.0} & 4.0\tinysd{0.2} + 0.6\tinysd{0.3} & 1607.7\tinysd{23.9} & 1613.4\tinysd{23.0} & 5.7\tinysd{1.0}\\
&0.5  & 1635.2\tinysd{10.5} & 1608.0\tinysd{25.4} & 1612.7\tinysd{25.2} & 4.3\tinysd{0.2} + 0.4\tinysd{0.2} & 1607.4\tinysd{25.8} & 1612.7\tinysd{25.3} & 5.3\tinysd{0.6}\\
&0.1  & 1635.5\tinysd{12.0} & 1607.5\tinysd{16.1} & 1612.4\tinysd{16.1} & 4.6\tinysd{0.1} + 0.3\tinysd{0.2} & 1607.3\tinysd{16.2} & 1612.4\tinysd{16.1} & 5.1\tinysd{0.2}\\[1ex]
$\LBbh_\gamma$-alt
& 0.9 & 1639.4\tinysd{9.4~} & 1609.1\tinysd{21.1} & 1613.6\tinysd{21.0} & 4.0\tinysd{0.1} + 0.6\tinysd{0.2} & 1608.0\tinysd{21.6} & 1613.6\tinysd{21.0} & 5.6\tinysd{0.8} \\
& 0.5 & 1639.4\tinysd{10.6} & 1608.2\tinysd{24.6} & 1612.9\tinysd{24.5} & 4.3\tinysd{0.1} + 0.4\tinysd{0.2} & 1607.6\tinysd{24.8} & 1612.9\tinysd{24.5} & 5.3\tinysd{0.5} \\
& 0.1 & 1639.2\tinysd{10.4} & 1608.8\tinysd{26.6} & 1613.7\tinysd{26.3} & 4.7\tinysd{0.1} + 0.3\tinysd{0.2} & 1608.6\tinysd{26.6} & 1613.7\tinysd{26.3} & 5.1\tinysd{0.3} \\
\bottomrule
\end{tabular}
\end{table*}

\paragraph{Bounds} We take a single run of $\LELBO$ for the explicit posterior and uses its decoder as the fixed decoder to train the encoders (or posteriors) for
$\LB_\gamma$ for $\gamma\in\setof{0.1,0.5,0.9, 1.0}$.
The encoder neural networks are initialised randomly and optimised for 50 epochs with the same number of Monte Carlo as before during training.
Since the decoder and hence likelihood models are now fixed together with the prior, we may compare the train and test objectives as bounds on a single fixed log-evidence.
With smaller $\gamma$, we obtain tighter evidence bounds and posteriors closer to the prior (second, third and eighth columns of \cref{tbl:vae-mnist-evidences-bound}).
The results for the reference $\LELBO$ (that is, first row in the table with (reference) 1.0) are that for joint optimisation with the decoder for 500 epochs, and are  used for \cref{tbl:vae-mnist-evidences}.
Comparing the figures for the two instances of $\gamma=1.0$, we see that the dynamics of jointly training decoder and encoder can give better objectives.

\begin{table*}\centering
\caption{Log-evidences (higher better) over data samples for a single run, and its breakdown for VAE on MNIST data sets, 
for $\LB_\gamma$ where the decoders are fixed to be the same as that optimised for ELBO (first row in table).
For Monte Carlo averages, 1,024 samples are used.
}
\label{tbl:vae-mnist-evidences-bound}
\begin{tabular}{r*{7}{M}}\toprule
&&\multicolumn{3}{c}{Test using Objective} &\multicolumn{3}{c}{Test using ELBO} \\\cmidrule(lr){3-5}\cmidrule(lr){6-8}
$\gamma$  &\text{Train (Total)} & \text{Total} & \text{data} & \text{div} & \text{Total} & \text{data} & \text{div} \\\midrule
(reference) 1.0 & 1616.814 & 1591.401 & 1596.695 & 5.294 & 1591.407 & 1596.700 & 5.293 \\[1ex]
 1.0 & 1580.006 & 1558.188 & 1562.939 & 4.751 & 1558.189 & 1562.940 & 4.751 \\
 0.9 & 1625.566 & 1620.629 & 1624.170 & 3.541 & 1551.088 & 1554.442 & 3.354 \\
 0.5 & 1638.652 & 1636.510 & 1639.442 & 2.933 & 1451.295 & 1453.317 & 2.022 \\
 0.1 & 1641.655 & 1639.472 & 1642.944 & 3.472 & 1427.995 & 1429.871 & 1.876 \\
\bottomrule
\end{tabular}
\end{table*}

\paragraph{Image generation} We perform further investigation by plotting figures from a single run each.
\Cref{fig:vae-mnist-decoder} gives the images from the learnt decoders for the VAE experiments using latent variables sampled from the \emph{prior} in a regular manner.
We do not see any significant quality to the decoded images from the different values of $\gamma$ tried.
Differences are more visible for the harder Fashion-MINST data set (section~\ref{sec:decoder}).
There are two reasons why the images in the figure (except for \cref{fig:mnist-test}) are not sharper:
(1) we use simple neural networks for the decoder and encoders  \citep{ruthotto-gamm-2021} with only 88,837 parameters for the case of $\LB_\gamma$;
and (2) the images are \emph{mean} images from the decoded latent variables, that is, the parameters of the continuous Bernoulli distributions, and not samples.

\begin{figure*}
\centering
\begin{subfigure}[b]{0.24\textwidth}\centering
\includegraphics[width=\textwidth]{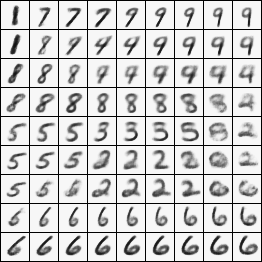}
\caption{$\LB$, $\gamma=1.0$}
\end{subfigure}
\hfill
\begin{subfigure}[b]{0.24\textwidth}\centering
\includegraphics[width=\textwidth]{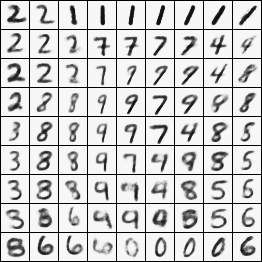}
\caption{$\LB$, $\gamma=0.9$}
\end{subfigure}
\hfill
\begin{subfigure}[b]{0.24\textwidth}\centering
\includegraphics[width=\textwidth]{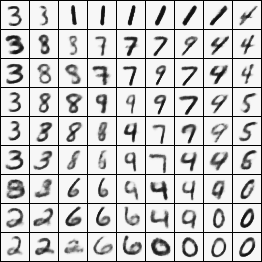}
\caption{$\LB$, $\gamma=0.5$}
\end{subfigure}
\hfill
\begin{subfigure}[b]{0.24\textwidth}\centering
\includegraphics[width=\textwidth]{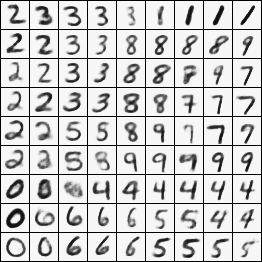}
\caption{$\LB$, $\gamma=0.1$}
\end{subfigure}
\\[3ex]
\begin{subfigure}[b]{0.24\textwidth}\centering
\includegraphics[width=\textwidth]{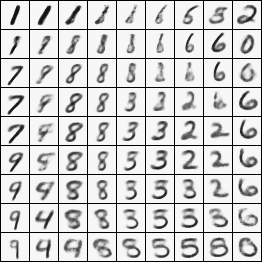}
\caption{$\LBh$, $\gamma=1.0$}
\end{subfigure}
\hfill
\begin{subfigure}[b]{0.24\textwidth}\centering
\includegraphics[width=\textwidth]{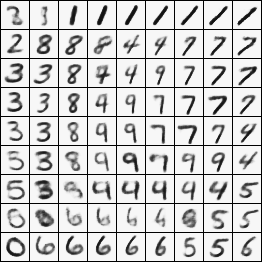}
\caption{$\LBh$, $\gamma=0.9$}
\end{subfigure}
\hfill
\begin{subfigure}[b]{0.24\textwidth}\centering
\includegraphics[width=\textwidth]{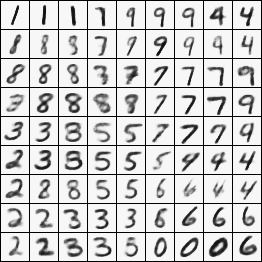}
\caption{$\LBh$, $\gamma=0.5$}
\end{subfigure}
\hfill
\begin{subfigure}[b]{0.24\textwidth}\centering
\includegraphics[width=\textwidth]{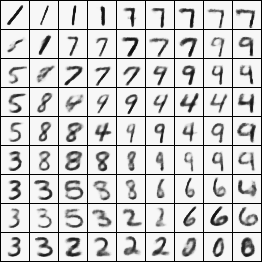}
\caption{$\LBh$, $\gamma=0.1$}
\end{subfigure}
\\[3ex]
\begin{subfigure}[b]{0.24\textwidth}\centering
\includegraphics[width=\textwidth]{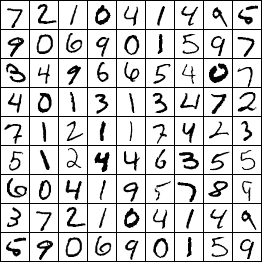}
\caption{Test samples}
\label{fig:mnist-test}
\end{subfigure}
\hfill
\begin{subfigure}[b]{0.24\textwidth}\centering
\includegraphics[width=\textwidth]{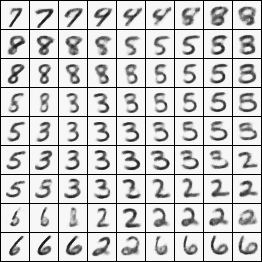}
\caption{$\LBbh$, $\gamma=0.9$}
\label{fig:vfbae-mnist-0.9}
\end{subfigure}
\hfill
\begin{subfigure}[b]{0.24\textwidth}\centering
\includegraphics[width=\textwidth]{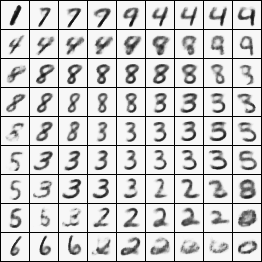}
\caption{$\LBbh$, $\gamma=0.5$}
\end{subfigure}
\hfill
\begin{subfigure}[b]{0.24\textwidth}\centering
\includegraphics[width=\textwidth]{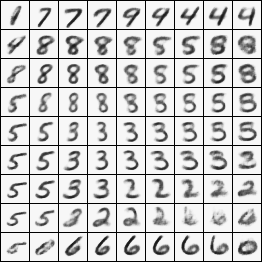}
\caption{$\LBbh$, $\gamma=0.1$}
\label{fig:vfbae-mnist-0.1}
\end{subfigure}
\\[3ex]
\begin{subfigure}[b]{0.24\textwidth}\centering
\hspace{\textwidth}
\end{subfigure}
\hfill
\begin{subfigure}[b]{0.24\textwidth}\centering
\includegraphics[width=\textwidth]{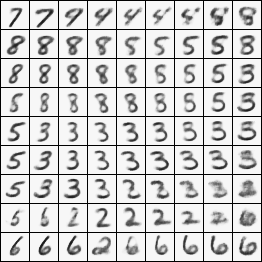}
\caption{$\LBbh$-alt, $\gamma=0.9$}
\label{fig:vfbaealt-mnist-0.9}
\end{subfigure}
\hfill
\begin{subfigure}[b]{0.24\textwidth}\centering
\includegraphics[width=\textwidth]{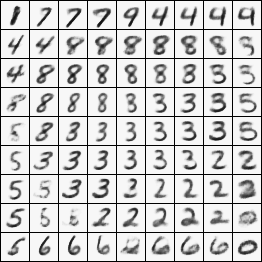}
\caption{$\LBbh$-alt, $\gamma=0.5$}
\end{subfigure}
\hfill
\begin{subfigure}[b]{0.24\textwidth}\centering
\includegraphics[width=\textwidth]{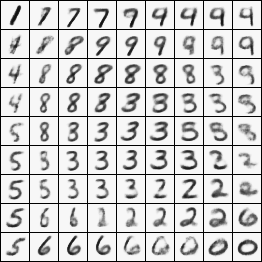}
\caption{$\LBbh$-alt, $\gamma=0.1$}
\label{fig:vfbaealt-mnist-0.1}
\end{subfigure}
\caption{Mean images from decoded latent variables obtained by coordinate-wise inverse-CDF (standard normal) transform from a unit square. 
For the last row, the first image are samples from the test set.
The quality of the images are visually similar across $\gamma$s and posterior families, and they are all not as sharp as the real test images.
The VAE is trained on the MNIST dataset.
}
\label{fig:vae-mnist-decoder}
\end{figure*}

\paragraph{Posteriors} \Cref{fig:vae-mnist-means} gives the means of the explicit posterior,
and \cref{fig:vae-mnist-scatter} gives the samples from the posteriors, explicit and semi-implicit.
For $\LB_{\gamma}$ and $\LBh_{\gamma}$, the posteriors are visually closer to the prior for smaller gamma,
except for between $\gamma=0.5$ and $0.1$: their KL divergences are similar in Table~\ref{tbl:vae-mnist-evidences}.
The scatter plots for $\LBbh_{\gamma}$ and $\LBbh_{\gamma}$-alt (\crefrange{fig:vae-mnist-scatter-lbbh-begin}{fig:vae-mnist-scatter-lbbhalt-end})
are for the Bayes posteriors, and they seems to have more clumps than $\LBh_{1.0}$'s (\cref{fig:vae-mnist-scatter-lbh-1.0}).

\newcommand\dotikz[1]{
\begin{tikzpicture}[scale=0.005\textwidth]
    \begin{axis}[xmin=-4, xmax=4, ymin=-4, ymax=4, xtick=\empty, ytick=\empty]
        \addplot[only marks, draw opacity=0, opacity=0.5, mark size=0.4pt, scatter, point meta min=0, point meta max=9]
        table [x index=0, y index=1, meta index=2, point meta=explicit, col sep=comma, header=false, each nth point={\eachnthpoint}]
        {results/VFAE-mnist-q-2-e-32-d-32-g-#1-n-100-b-64-s-50-mu.csv};
    \end{axis}
\end{tikzpicture}
}

\begin{figure*}
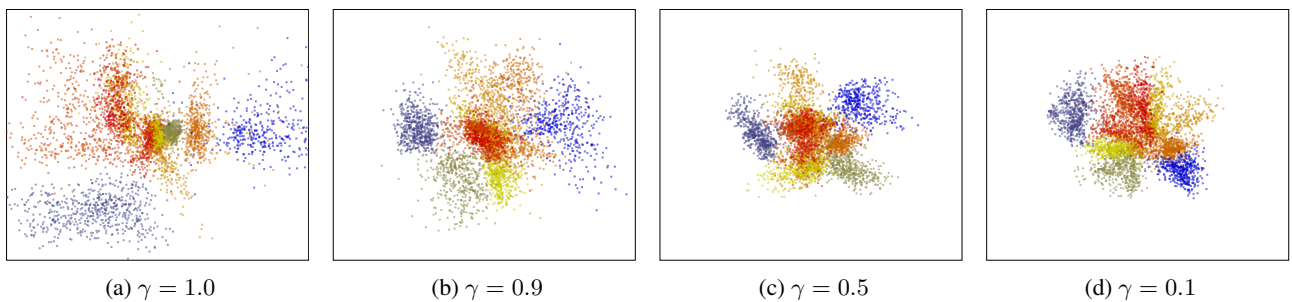

\tikzsetfigurename{VLFP-vae-mnist-means-}
\centering
\begin{subfigure}[b]{0.24\textwidth}\centering
\dotikz{1.00}
\caption{$\gamma=1.0$}
\end{subfigure}
\hfill
\begin{subfigure}[b]{0.24\textwidth}\centering
\dotikz{0.90}
\caption{$\gamma=0.9$}
\end{subfigure}
\hfill
\begin{subfigure}[b]{0.24\textwidth}\centering
\dotikz{0.50}
\caption{$\gamma=0.5$}
\end{subfigure}
\hfill
\begin{subfigure}[b]{0.24\textwidth}\centering
\dotikz{0.10}
\caption{$\gamma=0.1$}
\label{fig:posterior-not-collapse}
\end{subfigure}
\caption{Means of the explicit posteriors for 5,000 sampled MNIST test images, colour-coded by the class labels. All axes ranges from $-4$ to $4$.}
\label{fig:vae-mnist-means}
\end{figure*}

\renewcommand\dotikz[2]{
\begin{tikzpicture}[scale=0.005\textwidth]
    \begin{axis}[xmin=-4, xmax=4, ymin=-4, ymax=4, xtick=\empty, ytick=\empty]
        \addplot[only marks, draw opacity=0, opacity=0.2, mark size=0.4pt]
        table [x index=0, y index=1, col sep=comma, header=false, each nth point={\eachnthpoint}]
        {results/#1-mnist-q-2-e-32-d-32-g-#2-n-100-b-64-s-50-zs.csv};
    \end{axis}
\end{tikzpicture}
}

\begin{figure*}
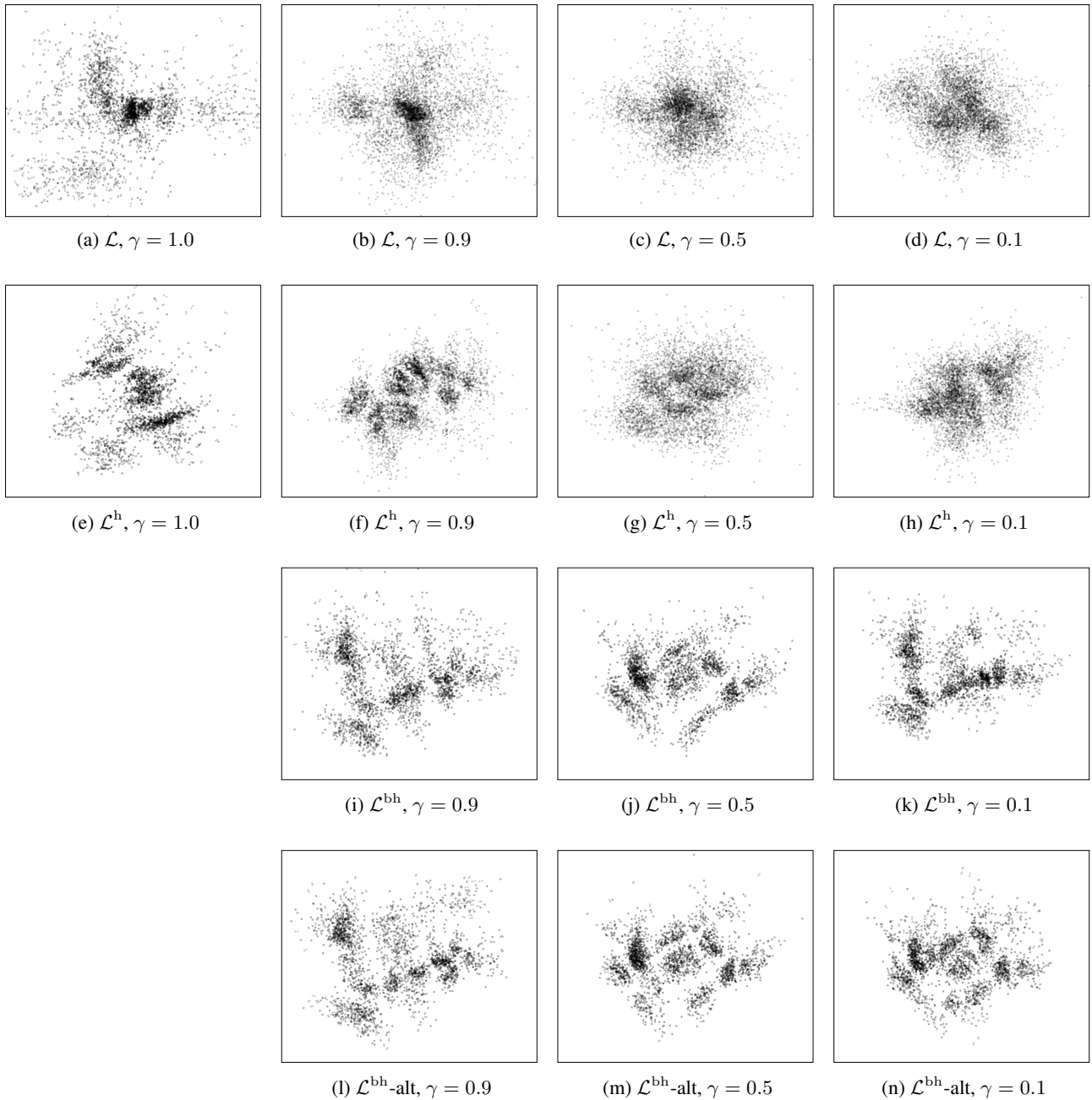

\tikzsetfigurename{VLFP-vae-mnist-scatter-}
\centering
\begin{subfigure}[b]{0.24\textwidth}\centering
\dotikz{VFAE}{1.00}
\caption{$\LB$, $\gamma=1.0$}
\end{subfigure}
\hfill
\begin{subfigure}[b]{0.24\textwidth}\centering
\dotikz{VFAE}{0.90}
\caption{$\LB$, $\gamma=0.9$}
\end{subfigure}
\hfill
\begin{subfigure}[b]{0.24\textwidth}\centering
\dotikz{VFAE}{0.50}
\caption{$\LB$, $\gamma=0.5$}
\end{subfigure}
\hfill
\begin{subfigure}[b]{0.24\textwidth}\centering
\dotikz{VFAE}{0.10}
\caption{$\LB$, $\gamma=0.1$}
\end{subfigure}
\\[3ex]
\begin{subfigure}[b]{0.24\textwidth}\centering
\dotikz{VFAESI}{1.00}
\caption{$\LBh$, $\gamma=1.0$}
\label{fig:vae-mnist-scatter-lbh-1.0}
\end{subfigure}
\hfill
\begin{subfigure}[b]{0.24\textwidth}\centering
\dotikz{VFAESI}{0.90}
\caption{$\LBh$, $\gamma=0.9$}
\end{subfigure}
\hfill
\begin{subfigure}[b]{0.24\textwidth}\centering
\dotikz{VFAESI}{0.50}
\caption{$\LBh$, $\gamma=0.5$}
\end{subfigure}
\hfill
\begin{subfigure}[b]{0.24\textwidth}\centering
\dotikz{VFAESI}{0.10}
\caption{$\LBh$, $\gamma=0.1$}
\end{subfigure}
\\[3ex]
\begin{subfigure}[b]{0.24\textwidth}\centering
\hspace{\textwidth}
\end{subfigure}
\hfill
\begin{subfigure}[b]{0.24\textwidth}\centering
\dotikz{VFBAESI2}{000000.90000}
\caption{$\LBbh$, $\gamma=0.9$}
\label{fig:vae-mnist-scatter-lbbh-begin}
\end{subfigure}
\hfill
\begin{subfigure}[b]{0.24\textwidth}\centering
\dotikz{VFBAESI2}{000000.50000}
\caption{$\LBbh$, $\gamma=0.5$}
\end{subfigure}
\hfill
\begin{subfigure}[b]{0.24\textwidth}\centering
\dotikz{VFBAESI2}{000000.10000}
\caption{$\LBbh$, $\gamma=0.1$}
\label{fig:vae-mnist-scatter-lbbh-end}
\end{subfigure}
\\[3ex]
\begin{subfigure}[b]{0.24\textwidth}\centering
\hspace{\textwidth}
\end{subfigure}
\hfill
\begin{subfigure}[b]{0.24\textwidth}\centering
\dotikz{VFBAESI}{0.90}
\caption{$\LBbh$-alt, $\gamma=0.9$}
\end{subfigure}
\hfill
\begin{subfigure}[b]{0.24\textwidth}\centering
\dotikz{VFBAESI}{0.50}
\caption{$\LBbh$-alt, $\gamma=0.5$}
\end{subfigure}
\hfill
\begin{subfigure}[b]{0.24\textwidth}\centering
\dotikz{VFBAESI}{0.10}
\caption{$\LBbh$-alt, $\gamma=0.1$}
\label{fig:vae-mnist-scatter-lbbhalt-end}
\end{subfigure}
\caption{5,000 samples from the posteriors of the MNIST test images. For $\LBbh$ and $\LBbh$-alt, the Bayes posteriors are used. All axes ranges from $-4$ to $4$.}
\label{fig:vae-mnist-scatter}
\end{figure*}

\renewcommand\dotikz[3]{
\begin{tikzpicture}[scale=0.00125\textwidth]
    \begin{axis}[xmin=-0.05, xmax=1.05, ymin=-0.05, ymax=1.05, xtick=\empty, ytick=\empty]
        \addplot[only marks, draw opacity=0, opacity=0.3, mark size=3pt]
        table [x index=1, y index=2, col sep=comma, header=false, restrict expr to domain={\thisrowno{0}}{#3:#3}, each nth point={\eachnthpoint}]
        {results/#1-mnist-q-2-e-32-d-32-g-#2-n-100-b-64-s-50-is.csv};
    \end{axis}
\end{tikzpicture}
}

\Cref{fig:vae-mnist-si-5x5} plots the samples from the implicit distributions of 
$\LBh_{\gamma}$, $\LBbh_{\gamma}$ and $\LBbh_{\gamma}$-alt separately for 16 test images,
that is, one plot for each test image in each setting.
We see diversity in the implicit distributions in majority of the cases for $\LBh_{\gamma}$ with $\gamma<1.0$ (\crefrange{fig:vae-mnist-si-5x5-0.9}{fig:vae-mnist-si-5x5-0.1}).
This is diversity is seldom for ELBO ($\LBh_{1.0}$, \cref{fig:vae-mnist-si-5x5-1.0}), but not totally absent despite the theory for otherwise \citep{yin-icml-2018sivi},
probably because of noise in learning with Monte Carlo samples and the neural network parameters giving similar optimum values.

For $\LBbh_{\gamma}$ and $\LBbh_{\gamma}$-alt (\crefrange{fig:vbae-mnist-si-5x5-0.9}{fig:vbaealt-mnist-si-5x5-0.1}), we see almost lack of diversity in the implicit samples.
We attribute this to the larger magnitude of the data-fit term in the objective over the divergence (about 340 times larger in Table~\ref{tbl:vae-mnist-evidences})
causing the theoretical degeneracy of the ELBO to be prominent.

\newcommand\eotikz[2]{
\begin{tabular}{@{}r@{}c@{}c@{}l@{}}
\dotikz{#1}{#2}{0}&\dotikz{#1}{#2}{1}&\dotikz{#1}{#2}{3}&\dotikz{#1}{#2}{4}\\[-3.5pt]
\dotikz{#1}{#2}{6}&\dotikz{#1}{#2}{7}&\dotikz{#1}{#2}{9}&\dotikz{#1}{#2}{10}\\[-3.5pt] 
\dotikz{#1}{#2}{12}&\dotikz{#1}{#2}{13}&\dotikz{#1}{#2}{15}&\dotikz{#1}{#2}{16}\\[-3.5pt] 
\dotikz{#1}{#2}{18}&\dotikz{#1}{#2}{19}&\dotikz{#1}{#2}{21}&\dotikz{#1}{#2}{22} 
\end{tabular}}

\begin{figure*}
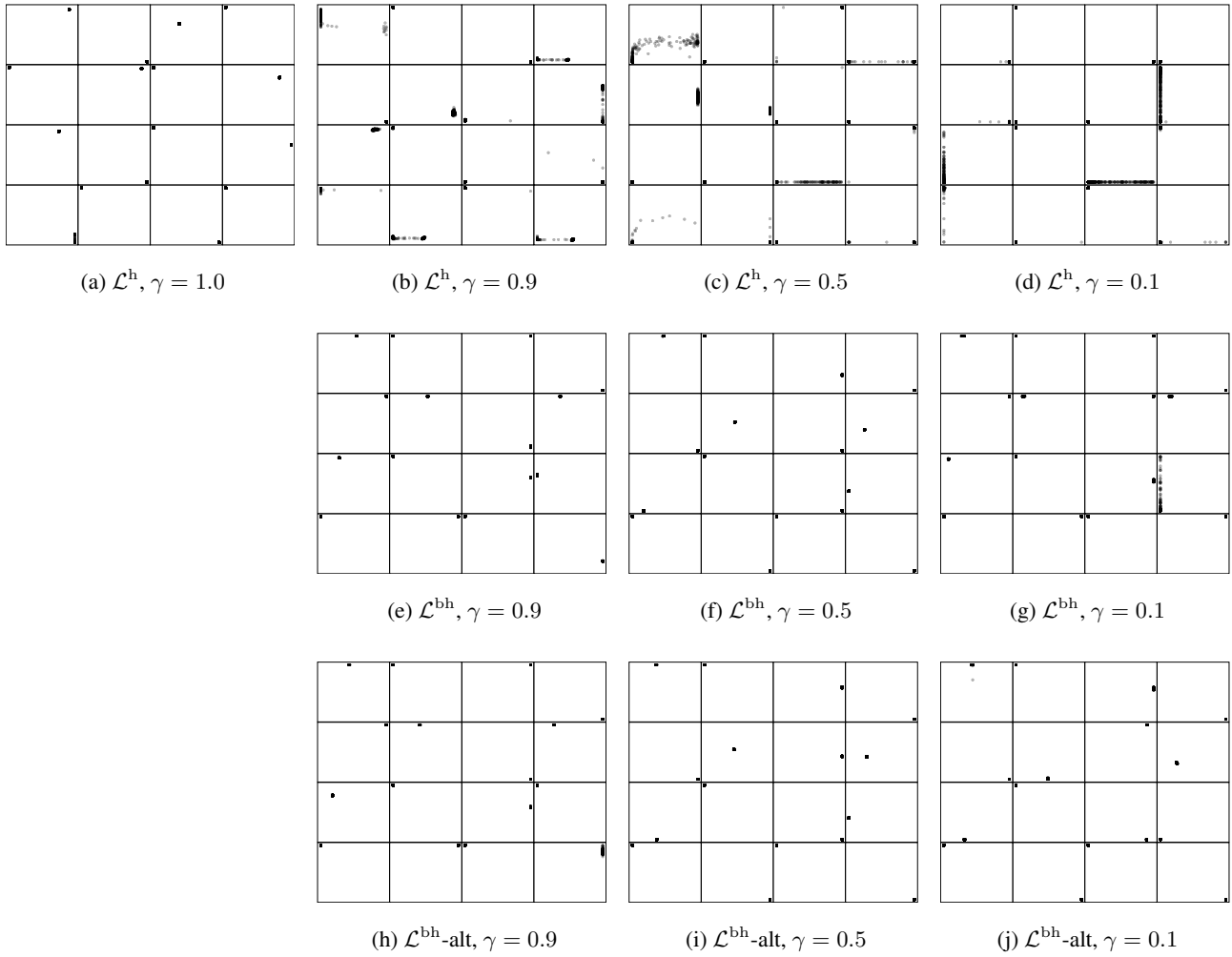

\tikzsetfigurename{VLFP-vae-mnist-si-}
\centering
\begin{subfigure}[b]{0.24\textwidth}\centering
\eotikz{VFAESI}{1.00}
\caption{$\LBh$, $\gamma=1.0$}
\label{fig:vae-mnist-si-5x5-1.0}
\end{subfigure}
\hfill
\begin{subfigure}[b]{0.24\textwidth}\centering
\eotikz{VFAESI}{0.90}
\caption{$\LBh$, $\gamma=0.9$}
\label{fig:vae-mnist-si-5x5-0.9}
\end{subfigure}
\hfill
\begin{subfigure}[b]{0.24\textwidth}\centering
\eotikz{VFAESI}{0.50}
\caption{$\LBh$, $\gamma=0.5$}
\end{subfigure}
\hfill
\begin{subfigure}[b]{0.24\textwidth}\centering
\eotikz{VFAESI}{0.10}
\caption{$\LBh$, $\gamma=0.1$}
\label{fig:vae-mnist-si-5x5-0.1}
\end{subfigure}
\\[3ex]
\begin{subfigure}[b]{0.24\textwidth}\centering
\hspace{\textwidth}
\end{subfigure}
\hfill
\begin{subfigure}[b]{0.24\textwidth}\centering
\eotikz{VFBAESI2}{000000.90000}
\caption{$\LBbh$, $\gamma=0.9$}
\label{fig:vbae-mnist-si-5x5-0.9}
\end{subfigure}
\hfill
\begin{subfigure}[b]{0.24\textwidth}\centering
\eotikz{VFBAESI2}{000000.50000}
\caption{$\LBbh$, $\gamma=0.5$}
\end{subfigure}
\hfill
\begin{subfigure}[b]{0.24\textwidth}\centering
\eotikz{VFBAESI2}{000000.10000}
\caption{$\LBbh$, $\gamma=0.1$}
\label{fig:vbae-mnist-si-5x5-0.1}
\end{subfigure}
\\[3ex]
\begin{subfigure}[b]{0.24\textwidth}\centering
\hspace{\textwidth}
\end{subfigure}
\hfill
\begin{subfigure}[b]{0.24\textwidth}\centering
\eotikz{VFBAESI}{0.90}
\caption{$\LBbh$-alt, $\gamma=0.9$}
\end{subfigure}
\hfill
\begin{subfigure}[b]{0.24\textwidth}\centering
\eotikz{VFBAESI}{0.50}
\caption{$\LBbh$-alt, $\gamma=0.5$}
\end{subfigure}
\hfill
\begin{subfigure}[b]{0.24\textwidth}\centering
\eotikz{VFBAESI}{0.10}
\caption{$\LBbh$-alt, $\gamma=0.1$}
\label{fig:vbaealt-mnist-si-5x5-0.1}
\end{subfigure}
\caption{500 samples from the implicit posteriors for 16 MNIST test images: one small square is for one image. 
All axes ranges from $0$ to $1$, which is the range of the samples by design.}
\label{fig:vae-mnist-si-5x5}
\end{figure*}

To have a broad overview of the distributions of the implicits, we compute the sample covariance for the implicit distributions of each test image with 500 samples after transforming with arcsine-square-root.
Each sample covariance is used to compute the generalised variance and total variation,
and we summarise them over the 10,000 test images using the following descriptive statistics (\cref{tbl:implicit-stats}): median, maximum, mean, coefficient of variation (CV) and skewness (Fisher-Pearson coefficient).
We caution that this assumes single modes in the two-dimensional implicit distributions.
We find the medians, maximums and means of the generalised variances to be significantly smaller than that of the total variations, which indicates that most distributions approximately degenerate and can hardly be considered two-dimensional distributions.
Moreover, looking at the medians of the total variations, total degeneracy to delta-distributions occurs for more than half of the implicit distributions of $\LBh_\gamma$ with $\gamma=1.0$ (ELBO), $\LBbh_\gamma$ with $\gamma=0.1$ and $\LBbh_\gamma$-alt with $\gamma=0.9$.
These agree with the plots in \cref{fig:vae-mnist-si-5x5} and show that our approach cannot prevent degeneracies.
Nonetheless, if we compare among the statistics for the $\LBh_\gamma$s, we find that settings with $\gamma<1$ give less degenerate distributions than ELBO ($\gamma=1.0$).

The coefficient of variations are at least 1.9, indicating very different implicit distributions for the test samples. 
The skewness are positive, indicating high proportion of very low variance implicit distributions, and this is also shown by the medians being smaller than the means.
In particular, the skewness for ELBO is about five times more than for $\LBh_\gamma$ with $\gamma=0.9$.

\newcommand\fcsexp[2]{\prescript{}{#1}{#2}}

\begin{table}
\centering
\caption{Descriptive statistics, over the test images, of the generalised variance and total variation of the transformed implicit distributions.
Figures less than $10^{-10}$ are treated as zero.
For a number $a \times 10^{b}$, we express it as $\fcsexp{a}{b}$, except that we use $0$ when $a=0$ and $\fcsexp{a}{}$ when $b=0$.}
\label{tbl:implicit-stats}
\begin{tabular}{rMHHMH*{4}{M}HHMH*{4}{M}}\toprule
&&\multicolumn{8}{c}{Generalised Variance} & \multicolumn{8}{c}{Total Variation}\\\cmidrule(lr){3-10}\cmidrule(lr){11-18}
Objective & \gamma & \text{Min} & 1/4 & \text{Median} & 3/4 & \text{Max} & \text{Mean} & \text{CV} & \text{Skew} 
& \text{Min} & 1/4 & \text{Median} & 3/4 & \text{Max} & \text{Mean} & \text{CV} & \text{Skew} \\\midrule
$\LBh$
& 1.0 & 0 & 0 & 0 & 0 & \fcsexp{5.4}{-3} & \fcsexp{3.2}{-6} & \fcsexp{3.0}{1} & \fcsexp{4.5}{1} & 0 & 0 & 0 & \fcsexp{1.2}{-5} & \fcsexp{8.2}{-1} & \fcsexp{2.0}{-3} & \fcsexp{1.1}{1} & \fcsexp{1.8}{1}\\
& 0.9 & 0 & \fcsexp{9.09}{-27} & \fcsexp{4.8}{-8} & \fcsexp{2.0}{-5} & \fcsexp{1.9}{-1} & \fcsexp{1.3}{-3} & \fcsexp{6.8}{} & \fcsexp{1.1}{1} & \fcsexp{5.70}{-14} & \fcsexp{6.9}{-5} & \fcsexp{3.0}{-3} & \fcsexp{4.4}{-2} & \fcsexp{1.1}{} & \fcsexp{5.5}{-2} & \fcsexp{2.3}{} & \fcsexp{3.6}{}\\
& 0.5 & 0 & \fcsexp{7.21}{-21} & 0 & \fcsexp{3.9}{-5} & \fcsexp{1.9}{-1} & \fcsexp{1.8}{-3} & \fcsexp{5.5}{} & \fcsexp{9.3}{~} & 0 & \fcsexp{8.7}{-5} & \fcsexp{9.8}{-3} & \fcsexp{1.0}{-1} & \fcsexp{1.2}{} & \fcsexp{1.0}{-1} & \fcsexp{1.9}{} & \fcsexp{2.6}{}\\
& 0.1 & 0 & \fcsexp{8.88}{-36} & 0 & 0 & \fcsexp{1.5}{-1} & \fcsexp{1.3}{-3} & \fcsexp{6.4}{} & \fcsexp{9.7}{~} & \fcsexp{3.03}{-39} & 0 & \fcsexp{3.8}{-5} & \fcsexp{2.9}{-2} & \fcsexp{1.0}{} & \fcsexp{5.9}{-2} & \fcsexp{2.3}{} & \fcsexp{2.9}{}\\[1ex]
$\LBbh$
& 0.9 & 0 & 0 & 0 & 0 & \fcsexp{1.1}{-2} & \fcsexp{1.5}{-6} & \fcsexp{7.4}{1} & \fcsexp{9.7}{1} & 0 & 0 & \fcsexp{4.5}{-9} & \fcsexp{1.1}{-5} & \fcsexp{4.5}{-1} & \fcsexp{1.3}{-3} & \fcsexp{9.9}{} & \fcsexp{1.6}{1}\\
& 0.5 & 0 & 0 & 0 & 0 & \fcsexp{1.5}{-2} & \fcsexp{2.5}{-6} & \fcsexp{6.3}{1} & \fcsexp{8.7}{1} & 0 & 0 & \fcsexp{3.9}{-8} & \fcsexp{7.6}{-6} & \fcsexp{4.8}{-1} & \fcsexp{1.7}{-3} & \fcsexp{1.0}{1} & \fcsexp{1.7}{1}\\
& 0.1 & 0 & 0 & 0 & 0 & \fcsexp{8.2}{-2} & \fcsexp{2.4}{-5} & \fcsexp{4.3}{1} & \fcsexp{6.6}{1} & 0 & 0 & 0                 & \fcsexp{8.3}{-5} & \fcsexp{5.9}{-1} & \fcsexp{7.2}{-3} & \fcsexp{5.9}{} & \fcsexp{8.1}{}\\[1ex]
$\LBbh$-alt
& 0.9 & 0 & 0 & 0 & 0 & \fcsexp{1.5}{-2} & \fcsexp{3.2}{-6} & \fcsexp{5.2}{1} & \fcsexp{7.9}{1} & 0 & 0 & 0                & \fcsexp{1.3}{-5} & \fcsexp{5.6}{-1} & \fcsexp{4.3}{-3} & \fcsexp{7.8}{} & \fcsexp{1.1}{1}\\
& 0.5 & 0 & 0 & 0 & 0 & \fcsexp{2.5}{-3} & \fcsexp{6.3}{-7} & \fcsexp{4.5}{1} & \fcsexp{7.4}{1} & 0 & 0 & \fcsexp{1.4}{-7} & \fcsexp{7.1}{-6} & \fcsexp{5.3}{-1} & \fcsexp{1.8}{-3} & \fcsexp{1.0}{1} & \fcsexp{1.7}{1}\\
& 0.1 & 0 & 0 & 0 & 0 & \fcsexp{7.4}{-3} & \fcsexp{4.1}{-6} & \fcsexp{3.2}{1} & \fcsexp{4.1}{1} & 0 & 0 & \fcsexp{3.0}{-6} & \fcsexp{2.7}{-5} & \fcsexp{5.6}{-1} & \fcsexp{4.2}{-3} & \fcsexp{7.4}{} & \fcsexp{1.1}{1}\\
\bottomrule
\end{tabular}
\end{table}

\subsection{Improving the VAE Decoder by Learning Fractional Posterior}
\label{sec:decoder-extra}

\Crefrange{fig:vae-fmnist-samples-extra-elbo}{fig:vae-fmnist-samples-extra-tiny} provide sample images for the decoders trained with
$\gamma$ taking values $1.0$ (for ELBO), $10^{-1}$, $10^{-3}$ and $10^{-5}$.
Visually, the best samples are provided by $\gamma=10^{-5}$ (\cref{fig:vae-fmnist-samples-extra-tiny}).
For decreasing $\gamma$, the FIDs are 83.5, 69.5, 67.8 and 68.8 (smaller is better).
While the FIDs for the fractional posteriors are similar, they are all significantly better than the 
Bayes posterior's.

Since the $\beta$-VAE at its theoretical optimum also gives a power posterior,
we also evaluate training with its objective, called $\Lbeta_{\beta}$.
The fractional posterior for $\Lbeta_{\beta}$ corresponds directly to that for $\LB_{\gamma}$ with
$\beta = 1/\gamma$, so we use $10^{1}$, $10^{3}$ and $10^{5}$ for $\beta$.
The FIDs in increasing $\beta$ are 77.3, 334.7 and 342.3.
While the FIDs for $\Lbeta_{10}$ (corresponding to $\gamma=10^{-1})$ improves over the 83.5 of ELBO's,
it is significantly worse than the 69.5 of $\LB_{10^{-1}}$.
Moreover, increasing $\beta$ to $10^{3}$ and $10^{5}$ appears to cause degeneracy to a different and worse optima,
in contrast to the stability afforded by $\LB_{\gamma}$.
\Crefrange{fig:vae-fmnist-samples-extra-big}{fig:vae-fmnist-samples-extra-huge} provide the sample images.
We further tried $5$ and $10^2$ for $\beta$, giving FIDs 78.4 and 99.1.

\begin{figure*}
\begin{subfigure}[b]{0.24\textwidth}\centering
\includegraphics[width=\textwidth]{images/VFAE-fmnist-q-2-e-32-d-32-g-1.00000-n-100-b-64-2d.png}
\caption{$\LB$, $\gamma=1$}
\label{fig:vae-fmnist-samples-extra-elbo}
\end{subfigure}
\hfill
\begin{subfigure}[b]{0.24\textwidth}\centering
\includegraphics[width=\textwidth]{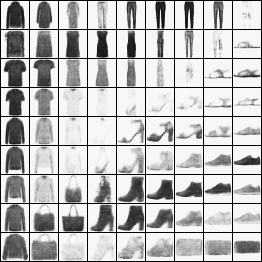}
\caption{$\LB$, $\gamma=10^{-1}$}
\end{subfigure}
\hfill
\begin{subfigure}[b]{0.24\textwidth}\centering
\includegraphics[width=\textwidth]{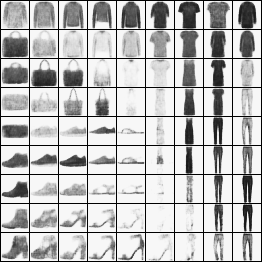}
\caption{$\LB$, $\gamma=10^{-3}$}
\end{subfigure}
\hfill
\begin{subfigure}[b]{0.24\textwidth}\centering
\includegraphics[width=\textwidth]{images/VFAE-fmnist-q-2-e-32-d-32-g-0.00001-n-100-b-64-2d.png}
\caption{$\LB$, $\gamma=10^{-5}$}
\label{fig:vae-fmnist-samples-extra-tiny}
\end{subfigure}
\\[3ex]
\begin{subfigure}[b]{0.24\textwidth}\centering
\includegraphics[width=\textwidth]{images/VFAE-fmnist-q-2-e-32-d-32-g-1.00000-n-100-b-64-testset-9x9.png}
\caption{Test samples}
\end{subfigure}
\hfill
\begin{subfigure}[b]{0.24\textwidth}\centering
\includegraphics[width=\textwidth]{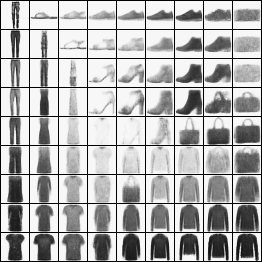}
\caption{$\Lbeta$, $\beta=10^{1}$}
\label{fig:vae-fmnist-samples-extra-big}
\end{subfigure}
\hfill
\begin{subfigure}[b]{0.24\textwidth}\centering
\includegraphics[width=\textwidth]{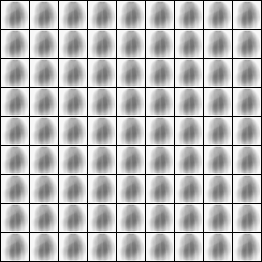}
\caption{$\Lbeta$, $\beta=10^{3}$}
\label{fig:vae-fmnist-samples-extra-large}
\end{subfigure}
\hfill
\begin{subfigure}[b]{0.24\textwidth}\centering
\includegraphics[width=\textwidth]{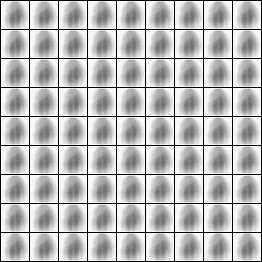}
\caption{$\Lbeta$, $\beta=10^{5}$}
\label{fig:vae-fmnist-samples-extra-huge}
\end{subfigure}
\caption{Mean images from decoded latent variables obtained by coordinate-wise inverse-CDF (standard normal) transform from a unit square. 
The VAE is trained on the Fashion-MNIST dataset.
The top row uses the bound introduced in this paper; the bottom row (sans the first figure) uses the objective from $\beta$-VAE.
}
\label{fig:vae-fminst-samples-extra}
\end{figure*}

Table~\ref{tbl:vae-fmnist-evidences} provides the statistics of the bounds to the data evidence for the trained VAEs.
Similar to the results for MNIST (Table~\ref{tbl:vae-mnist-evidences}),
$\LB_{\gamma}$ with smaller $\gamma$ give tighter bounds and the learnt posteriors are closer to the prior.
For $\Lbeta_{10}$, the bounds are looser than ELBO's, as expected.
For $\Lbeta_{\beta}$ with $\beta\in\setof{10^3, 10^5}$, the direct multiplication of the divergence term in $\Lbeta_{\beta}$
has cause instability such that the empirical divergence computed by sampling becomes negative --- more samples than the 1,024 used here could resolve this issue for $\beta$-VAE.

All the preceding results are with two-dimensional latent space.
We tested a case of four-dimensional latent space using our bound with $\gamma$ set to $1.0$ (for ELBO), $10^{-3}$ and $10^{-5}$.
With this more expressive model, the evidences are larger (last three row in Table~\ref{tbl:vae-fmnist-evidences}), and the FIDs are better at 58.3, 56.8 and 55.8.
Again, we see an advantage for $\gamma<1.0$, though now the benefits are much less significant.

\begin{table*}\centering
\caption{Average log-evidences (higher better) over data samples, and its breakdown for VAE on Fashion-MNIST data sets. 
For Monte Carlo averages, 1,024 samples are used. 
For $\gamma=1.0$, the figures are the same under \emph{Test using Objective} and \emph{Test using ELBO}. 
The columns under \emph{Test using ELBO} are \emph{solely} for diagnostics to understand the learnt posteriors using the same metrics: they are not performance measures.
For ease of comparison, we also add the last column of FID scores of the 10,000 generated images.
}
\label{tbl:vae-fmnist-evidences}
\begin{tabular}{cMM*{7}{R}R}\toprule
&&&&\multicolumn{3}{c}{Test using Objective} &\multicolumn{3}{c}{Test using ELBO} \\\cmidrule(lr){5-7}\cmidrule(lr){8-10}
$\dim(z)$ & \text{Obj.} & \gamma\text{ or } \beta &\text{Train (Total)} & \text{Total} & \text{data} & \text{div} & \text{Total} & \text{data} & \text{div} & \text{FID}\\\midrule
2
&\LB_\gamma
 & 1.0  & 1152.30 & 1118.15 & 1123.11 & 4.96 & 1118.15 & 1123.11 & 4.96 & ~83.5\\
&& 10^{-1} & 1180.08 & 1171.38 & 1173.97 & 2.59 & ~902.79 & ~904.56 & 1.77 & ~69.5 \\
&& 10^{-3} & 1179.68 & 1171.05 & 1173.91 & 2.87 & ~915.56 & ~917.30 & 1.74 & ~67.8 \\
&& 10^{-5} & 1183.97 & 1175.29 & 1178.04 & 2.75 & ~853.45 & ~855.16 & 1.71 & ~68.8 \\[1ex]
&\Lbeta_\beta
 & 5.0 & 1135.83 & 1103.29 & 1121.04 & 17.76 & 1117.50 & 1121.05 & 3.55 & ~78.4 \\
&& 10^{1} & 1120.62 & 1086.02 & 1117.13 & 31.11 & 1114.01 & 1117.12 & 3.11 & ~77.3\\
&& 10^{2} &  937.29 & 921.26 & 1067.22 & 145.95 & 1065.74 & 1067.20 & 1.46 & ~99.1\\
&& 10^{3} &  754.34 & 752.08  & 598.54 & -153.54 & 598.69 & 598.54 & -0.15 & 334.7\\
&& 10^{5} & 15946.58 & 15952.71 & 598.53 & -15354 & 598.69 & 598.54 & -0.15 & 342.3\\[1ex]
4
&\LB_\gamma
& 1.0     & 1220.04  & 1200.40 & 1207.98 & 7.58 & 1200.40 & 1207.98 & 7.58  & ~58.3 \\
&& 10^{-3} & 1231.16  & 1219.11 & 1225.14 & 6.04 & 1147.20 & 1151.29 & 4.09 & ~56.8 \\
&& 10^{-5} & 1231.02  & 1219.24 & 1225.36 & 6.12 & 1147.87 & 1152.03 & 4.16 & ~55.8 \\
\bottomrule
\end{tabular}
\end{table*}

\subsection{Source Codes and Data Sets}

Other than the standard Python and PyTorch (\url{https://pytorch.org/}), including Torchvision, packages,
we take reference from and make use of the following source codes:
\begin{description}
\item[VAE] \url{https://github.com/EmoryMLIP/DeepGenerativeModelingIntro}
\item[SIVI] \url{https://github.com/mingzhang-yin/SIVI}
\item[FID] \url{https://github.com/mseitzer/pytorch-fid}
\end{description}
The above PyTorch code for FID is recommended by the original authors of FID at \url{https://github.com/bioinf-jku/TTUR}.
Our source codes are available at \url{https://github.com/csiro-funml/Variational-learning-of-Fractional-Posteriors/}.
They are executable on a single NVIDIA T4 GPU, which are available free (with limitations) on
Google Collab, Kaggle and Amazon SageMaker Studio Lab at the point of writing.
A single training run of the 500 epochs for the VAE experiments for $\LB_\gamma$ is currently achievable within 12 hours on Kaggle.

The MNIST and Fashion-MNIST data sets are available via Torchvision.

\end{document}